\documentclass[runningheads]{src/llncs}
\usepackage[T1]{fontenc}
\usepackage{graphicx}
\usepackage{booktabs}
\usepackage[misc]{ifsym}
\newcommand{\corr}{(\Letter)}
\usepackage{mwe}

\usepackage{subcaption}
\usepackage{hyperref}

\usepackage{amsmath,amssymb,amsfonts,mathtools}
\usepackage{bm}

\providecommand{\R}{\mathbb{R}}

\DeclareMathOperator{\Vol}{Vol}

\spnewtheorem{assumption}{Assumption}{\bfseries}{\itshape}

\begin{document}

\title{Across the Loss Landscape with Progressive Growth}
\toctitle{Across the Loss Landscape with Progressive Growth}

\titlerunning{Across the Loss Landscape with Progressive Growth}

\author{Paul Caillon\inst{1} \corr \and Christophe Cerisara\inst{2} \and Alexandre Allauzen\inst{1,3}}
\tocauthor{Paul Caillon, Christophe Cerisara, and Alexandre Allauzen}

\authorrunning{P. Caillon et al.}

\institute{MILES Team, LAMSADE, Université Paris Dauphine-PSL \email{\{name.surname\}@dauphine.psl.eu}
\and
LORIA CNRS, Campus Scientifique, rue du Jardin Botanique, 54500 Vandoeuvre-les-Nancy \email{christophe.cerisara@loria.fr}
\and
ESPCI-PSL}

\maketitle              

\begin{abstract}
Deep neural networks generalize well despite their highly nonconvex, overparameterized loss landscapes, a phenomenon often associated with the geometry of the minima found by stochastic optimization. We study how incremental “grow-and-optimize” strategies bias training toward flatter regions by viewing growth as progressive constraint relaxation. Starting from a low-dimensional submodel, we iteratively expand the trainable parameters by unlocking nested random subspaces while freezing the orthogonal complement at the network initialization, re-optimizing after each expansion until the full architecture is reached. Under standard local regularity conditions around non-degenerate minima, we prove that local sublevel sets are well approximated by ellipsoids and that basin accessibility under frozen constraints can be characterized by an explicit effective curvature in the frozen directions. This leads to an explanation of the bias: progressive growth increases the relative weight of wide basins and suppresses sharp ones through a volume effect induced by the frozen constraints. We empirically validate these predictions in controlled toy landscapes and in a realistic ResNet/CIFAR-100 setting and confirm that although progressive subspace growth reliably produces flatter solutions, curvature reductions do not universally translate into improved test performance, highlighting subtleties in the flatness–generalization connection. The code is available on \href{https://github.com/p0lcAi/Across-the-Loss-Landscape}{GitHub}.

\keywords{Grow-and-optimize \and Incremental Training \and Loss Landscape Geometry \and Flat Minima \and Curvature and Local Hessian}
\end{abstract}

\section{Introduction and Related Work}
\label{sec:intro_related}

Modern neural networks are routinely optimized in very high-dimensional parameter spaces and yet achieve strong generalization with relatively simple variants of stochastic gradient descent (SGD). 
A classical geometric narrative is that SGD tends to converge to \emph{flat} solutions, dating back at least to early discussions of flat minima as a robustness principle \cite{hochreiter1997flat}. Modern work has further connected this idea to stochasticity in training, viewing SGD as a noisy dynamical system whose long-run behavior resembles diffusion-like processes \cite{mandt2017stochastic,welling2011bayesian,smith2018bayesian}.
\newline
At the same time, the relation between flatness and generalization is now known to be subtle. Flatness is inherently local and strongly parameterization-dependent, and even sharp minima may generalize well under benign reparameterizations \cite{dinh2017sharp}. Empirically, sharpness-based quantities appear only partially predictive, often depending on the precise metric, architecture, and training regime considered, as highlighted by recent large-scale studies \cite{2023arXiv230207011A}. More broadly, the search for reliable generalization predictors has produced mixed conclusions: while some measures can be informative in carefully controlled settings \cite{jiang2019fantastic}, broader evaluations suggest that many proposed predictors fail to explain generalization consistently across realistic training variations \cite{gastpar2023fantastic}. These observations suggest a more cautious interpretation of flatness: rather than a universal explanation of generalization, it is more appropriate to view it as a \emph{geometric bias of training dynamics}, with strong setting-dependent  connection to test performance.
\newline
A complementary perspective studies the global structure of the loss landscape through connectivity and interpolation barriers. 
Here, we use the term \emph{basin} informally to denote a broad connected low-loss region of parameter space, rather than an isolated strict local minimum. 
Empirically, many trained solutions can be connected by low-loss curves, suggesting that what appear locally as distinct minima may in fact belong to larger connected regions \cite{garipov2018loss,draxler2018essentially}. Interpolation-based diagnostics have consequently become a useful practical tool for probing whether two solutions lie in the same broad basin and for quantifying barriers between them \cite{goodfellow2014qualitatively}. This line of work reinforces the idea that local curvature alone may be insufficient to characterize the optimization geometry that ultimately matters.
\newline
Because these geometric effects are shaped by optimization dynamics, the optimizer itself also matters. Momentum changes the effective dynamics and can accelerate movement along directions of low curvature \cite{Qian1999,Sutskever2013,Zhou2020momentum}. Adaptive methods such as Adam \cite{kingma2014adam} have likewise been argued to induce different implicit biases from SGD, sometimes converging to solutions with different sharpness properties \cite{Wilson2017marginal,Zhang2019adamsharp,Chen2020adamflatness,Barrett2020implicit}. At the same time, directly characterizing curvature in deep networks remains difficult: Hessian-based quantities are costly to estimate and inherit many of the same parameterization issues as flatness itself \cite{dinh2017sharp}. In practice, spectral proxies such as the top eigenvalue and the trace remain useful diagnostics, especially when combined with stochastic estimators based on Hessian-vector products \cite{yao2020pyhessian,ghorbani2019investigation,sagun2016eigenvalues}.
\newline
Alongside this literature on geometry and implicit bias, another family of works studies training procedures in which model capacity evolves over time. Function-preserving expansions and network morphism methods modify architectures while attempting to retain useful representations \cite{chen2015net2net}. Grow-and-prune strategies use expansion and sparsification to search for compact or efficient networks \cite{dai2019grow}. Related ideas also appear in neural architecture search and dynamic architecture adaptation, where capacity is explored progressively, either through continuous relaxations \cite{2018arXiv180609055L} or more explicit search-and-growth mechanisms \cite{wu2020firefly,verbockhaven2024growing}. In parallel, work on sparse and constrained training has shown that restricting optimization to a subset of parameters can still yield strong performance. Lottery-ticket-style results emphasize the existence of performant subnetworks within random initializations \cite{2018arXiv180303635F,2019arXiv190501067Z,2019arXiv191113299R}, while dynamic sparse training methods adapt sparsity patterns during learning \cite{evci2020rigging,mocanu2018scalable}. Together, these works show that training under evolving or structured parameter constraints is both practically relevant and algorithmically effective.
More directly related to our setting, \cite{caillon2021growing,caillon2024growing} provided initial theoretical and empirical evidence that growing neural networks tend to converge to flatter optima than their fixed-architecture counterparts. 
The theoretical arguments developed in these works rely on a stylized basin-volume view, in which lower-dimensional constraints are more likely to intersect wider basins.
\newline
The present work substantially extends these initial analyses by replacing this volume-based argument with a local geometric characterization of accessibility under progressive constraint relaxation. We study incremental model growth as a controlled perturbation of optimization: training starts in a low-dimensional submodel and successively unlocks \emph{nested} parameter subspaces until the full architecture is reached. Growth is used only during training, making it natural to view each stage as optimization on an affine slice where the active coordinates are trainable and the remaining coordinates are frozen at initialization. This perspective leads to a geometric question: which minima remain accessible as these constraints are relaxed?
\newline
Our central claim is that progressive growth induces an \emph{entropic} selection mechanism. A basin is more likely to be reached when it is compatible with many random frozen constraints, yielding an energy--entropy tradeoff reminiscent of ideas behind Entropy-SGD \cite{chaudhari2016entropy}, but arising here implicitly from the feasible set rather than from an explicit objective modification. Under local regularity assumptions, we derive accessibility probabilities that depend on both a location term, governed by the projection of a minimum onto frozen coordinates, and an entropy term, governed by an effective curvature in frozen directions through a Schur-complement reduction.
\newline
This viewpoint also guides our empirical protocol. Rather than relying solely on scalar flatness measures, we evaluate whether growth preserves or changes basin membership across stages using interpolation barriers between pre- and post-expansion solutions, together with retention and leakage metrics that directly reflect the theory. We first validate the mechanism in controlled toy problems where minima and curvatures are known, and then study the same phenomena in realistic deep learning settings, focusing on ResNet training on CIFAR-100. In this way, the paper aims to connect a theory of accessibility under constraint relaxation with a practical experimental methodology for studying growth-induced geometric bias.
We emphasize that our goal is explanatory rather than algorithmic. Progressive growth is studied as a controlled intervention on the feasible training set, not as a plug-in recipe for improving test accuracy. Accordingly, our empirical evaluation focuses on transition geometry in addition to final predictive performance. We also use internal controls to separate progressive constraint relaxation from simpler forms of delayed capacity release.
\section{Theoretical Analysis: Progressive Growth Biases Optimization Toward Flat Regions}
\label{sec:theory}

In this section we develop a local geometric explanation for why progressive growth tends to favor flatter minima. The key observation is that progressive growth can be viewed as a sequence of constrained optimization problems: at each stage, only a subset of parameters is trainable, while the remaining coordinates are frozen at their initialization values. Training therefore evolves on a sequence of affine slices of increasing dimension rather than in the full parameter space from the start. A minimum can be reached at a given stage only if the current slice passes sufficiently close to it. Our main result is that this compatibility condition is easier to satisfy for minima that are flatter along the frozen directions, so that progressive growth statistically favors such minima.

The analysis is intentionally local. We do not assume that optimization probabilities are governed by global basin volume, nor do we claim that flatness is a universal explanation of generalization. Instead, we isolate a geometric mechanism specific to progressive growth: under partial freezing, broad minima remain compatible with a larger set of frozen constraints and therefore remain reachable more often throughout the growth process. For readability, we present the main statements and their interpretation here and defer the proofs to the Appendix~\ref{app:theory_proofs}.

\subsection{Setup: progressive growth as optimization on affine slices}

Let $f:\R^d\to\R$ denote the empirical risk, and let $\theta^0\in\R^d$ be a random initialization. We consider a stage-wise growth procedure
$$
0 < K_0 < K_1 < \cdots < K_T = d,
$$
where at stage $t$ only a $K_t$-dimensional subspace is trainable. For theoretical clarity, we model this active subspace as random. Let $Q\in O(d)$ be Haar-distributed, and write
$$
Q = [U_t\;V_t],
$$
where $U_t\in\R^{d\times K_t}$ spans the active subspace and $V_t\in\R^{d\times p_t}$ spans the frozen complement, with $p_t=d-K_t$. The feasible set at stage $t$ is then
$$
\mathcal{A}_t
=
\theta^0 + \mathrm{span}(U_t)
=
\{\theta\in\R^d : V_t^\top(\theta-\theta^0)=0\}.
$$
The corresponding idealized stage-wise solution is
\begin{equation}
\theta_t \in \arg\min_{\theta\in\mathcal{A}_t} f(\theta).
\label{eq:stagewise-min}
\end{equation}

This formulation makes the geometry explicit: at stage $t$, optimization can move only inside the affine slice $\mathcal{A}_t$, and growth gradually relaxes this constraint by enlarging the active subspace.

\subsection{Local minima and accessibility}

Assume that $f$ has isolated local minima
$
\{\theta_i^\star\}_{i=1}^m .
$

Fix one such minimum $\theta_i^\star$. For a small tolerance $\varepsilon>0$, define its local basin by
$$
\mathcal{L}_{i,\varepsilon}
=
\{\theta : f(\theta)\le f(\theta_i^\star)+\varepsilon\}
\cap
B(\theta_i^\star,r_i),
$$
where $r_i>0$ is chosen so that these neighborhoods do not overlap. We say that basin $i$ is \emph{$\varepsilon$-accessible} at stage $t$ if
$$
\mathcal{A}_t \cap \mathcal{L}_{i,\varepsilon}\neq\emptyset.
$$
Accessibility simply means that the current growth stage still allows optimization to come sufficiently close to that minimum. If the frozen coordinates force the affine slice away from the basin, then that minimum is unavailable at that stage no matter how well optimization is performed inside the slice.

\subsection{Local regularity and quadratic approximation}

Our analysis uses a standard local quadratic model around each minimum.

\begin{assumption}[Local regularity]
\label{ass:local}
For each minimum $\theta_i^\star$,
$$
\nabla f(\theta_i^\star)=0,
\qquad
H_i := \nabla^2 f(\theta_i^\star)\succ 0.
$$
Moreover, the Hessian is locally Lipschitz: there exist $r_i>0$ and $\rho_i>0$ such that, for all $\theta,\theta'\in B(\theta_i^\star,r_i)$,
$$
\|\nabla^2 f(\theta)-\nabla^2 f(\theta')\|
\le
\rho_i\|\theta-\theta'\|.
$$
\end{assumption}

This positive-definite Hessian assumption should be understood as a local
regularity model rather than as a literal description of modern
overparameterized networks. Deep networks typically contain exact or approximate
symmetries, which induce flat or nearly flat directions and can turn isolated
minima into local minimum manifolds. In such settings, the present analysis can
be interpreted after quotienting out exact symmetries, or equivalently by
restricting attention to the normal directions of a local minimum manifold. The
Schur-complement mechanism then applies to the non-flat normal block, while
tangent directions contribute neutral volume factors. A complete treatment of
degenerate minimum manifolds is beyond the scope of this work, but the
transition diagnostics used in the experiments below are designed precisely to avoid relying on isolated-minimum structure alone.

Under this assumption, the loss near $\theta_i^\star$ behaves like a quadratic bowl with curvature matrix $H_i$. The local basin can thus be approximated by an ellipsoid.

\begin{proposition}[Local ellipsoidal approximation]
\label{prop:local-ellipsoid}
Under Assumption~\ref{ass:local}, there exist constants $c_{1,i},c_{2,i}>0$ and $\varepsilon_i^{\max}>0$ such that, for all $0<\varepsilon\le \varepsilon_i^{\max}$,
$$
\left\{
\theta_i^\star+\delta :
\frac12 \delta^\top H_i\delta \le c_{1,i}\varepsilon
\right\}
\subset
\mathcal{L}_{i,\varepsilon}
\subset
\left\{
\theta_i^\star+\delta :
\frac12 \delta^\top H_i\delta \le c_{2,i}\varepsilon
\right\}.
$$
\end{proposition}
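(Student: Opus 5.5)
The plan is to use Taylor's theorem with the Lipschitz Hessian remainder. For $\delta$ with $\|\delta\|\le r_i$, since $\nabla f(\theta_i^\star)=0$, the integral form of the remainder gives
\begin{equation*}
\Bigl|f(\theta_i^\star+\delta)-f(\theta_i^\star)-\tfrac12\delta^\top H_i\delta\Bigr|\le \tfrac{\rho_i}{6}\|\delta\|^3 .
\end{equation*}
Write $\lambda=\lambda_{\min}(H_i)>0$ and $\Lambda=\lambda_{\max}(H_i)$. Then $\frac12\delta^\top H_i\delta\ge \frac{\lambda}{2}\|\delta\|^2$. Hence on the smaller ball $\|\delta\|\le r_0:=\min(r_i,\,3\lambda/(2\rho_i))$ the cubic term is at most half the quadratic term. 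This yields the two-sided comparison
\begin{equation*}
\tfrac12 q(\delta)\le f(\theta_i^\star+\delta)-f(\theta_i^\star)\le \tfrac32 q(\delta),\qquad q(\delta):=\tfrac12\delta^\top H_i\delta .
\end{equation*}

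For the inner inclusion I take $c_{1,i}=2/3$. Suppose $q(\delta)\le \frac23\varepsilon$. Then $\|\delta\|^2\le 4\varepsilon/(3\lambda)$, so for $\varepsilon$ small we have $\|\delta\|\le r_0$. The comparison then gives $f-f(\theta_i^\star)\le\varepsilon$, and $\delta$ lies in $B(0,r_i)$. So the point is in $\mathcal{L}_{i,\varepsilon}$.

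For the outer inclusion the aim is $c_{2,i}=2$, and this is the main obstacle. The comparison only holds for $\|\delta\|\le r_0$, while $\mathcal{L}_{i,\varepsilon}$ is cut out by the larger ball $B(\theta_i^\star,r_i)$. A point with $r_0<\|\delta\|\le r_i$ could have small loss. I would handle this by contradiction, in one of two ways.

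\begin{itemize}
\item \emph{If $r_0=r_i$}, which we may arrange by shrinking $r_i$, since the assumption only asks for some $r_i$, nothing further is needed. Any $\delta\in\mathcal{L}_{i,\varepsilon}$ satisfies $q(\delta)\le 2(f-f(\theta_i^\star))\le 2\varepsilon$.
\item \emph{Otherwise}, keep $r_i$ fixed and use compactness plus isolation. Since $\theta_i^\star$ is a strict local minimum, set
\begin{equation*}
m:=\min\{f(\theta)-f(\theta_i^\star): r_0\le\|\theta-\theta_i^\star\|\le r_i\}.
\end{equation*}
Positivity of $m$ needs the minimum to be the unique minimizer on the closed ball $\bar B(\theta_i^\star,r_i)$. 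Choosing $\varepsilon_i^{\max}<m$ then excludes the annulus, so every point of $\mathcal{L}_{i,\varepsilon}$ lies in the inner ball, where the comparison applies.
\end{itemize}

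Collecting the smallness conditions gives
\begin{equation*}
\varepsilon_i^{\max}=\min\bigl(3\lambda r_0^2/4,\;m\bigr),
\end{equation*}
with the first term making the inner ellipsoid fit inside $B(0,r_0)$. The constants are then $c_{1,i}=2/3$ and $c_{2,i}=2$. In fact any $c_1<1<c_2$ works as $\varepsilon\to0$, by shrinking $r_0$.
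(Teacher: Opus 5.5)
Your proposal is correct and follows essentially the same route as the paper: the same cubic Taylor remainder $\frac{\rho_i}{6}\|\delta\|^3$, the same inner radius $\min\{r_i,3\lambda/(2\rho_i)\}$ giving $\frac12 q\le f-f_i^\star\le\frac32 q$, the same compactness argument on the annulus to fix $\varepsilon_i^{\max}$, and the same constants $c_{1,i}=2/3$, $c_{2,i}=2$. You are more careful than the paper on two points. First, the paper infers positivity of the annulus minimum from strict local minimality alone, but this really needs $f>f_i^\star$ on the whole punctured ball $\bar B(\theta_i^\star,r_i)$, or a small enough $r_i$, exactly as you flag; note that shrinking $r_i$ also changes $\mathcal{L}_{i,\varepsilon}$ itself. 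Second, the paper never checks that the inner ellipsoid lies in the small ball, which your explicit condition $\varepsilon\le 3\lambda r_0^2/4$ handles; this condition is in fact already implied by $\varepsilon\le m$, since $m\le\frac34\lambda r_0^2$ on the sphere $\|\delta\|=r_0$.
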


Thus, up to constant-factor distortions, the local basin is governed by the Hessian at the minimum. This is the first place where curvature enters the theory.

\subsection{Frozen coordinates induce an effective curvature}

We now analyze the effect of the affine constraint $\mathcal{A}_t$. Write local coordinates around $\theta_i^\star$ in the basis $[U_t\;V_t]$:
$
\theta
=
\theta_i^\star + U_t a + V_t b,
\qquad
a\in\R^{K_t},\;\; b\in\R^{p_t}.$

Here $a$ represents the active coordinates and $b$ the frozen ones. Define
$$
g_{i,t}(a,b)
:=
f(\theta_i^\star+U_t a+V_t b).
$$
For a fixed frozen offset $b$, the best loss attainable after optimizing the active coordinates is
$
\varphi_{i,t}(b)
=
\min_{a}
\bigl(g_{i,t}(a,b)-f(\theta_i^\star)\bigr).
$

The frozen coordinates imposed by the initialization are
$
b_{i,t}^0 := V_t^\top(\theta^0-\theta_i^\star).
$

Thus basin $i$ is $\varepsilon$-accessible at stage $t$ exactly when
$
\varphi_{i,t}(b_{i,t}^0)\le \varepsilon.
$

The reduced function $\varphi_{i,t}$ is again locally quadratic, but with a different curvature matrix.

\begin{theorem}[Effective curvature under frozen constraints]
\label{thm:schur}
Under Assumption~\ref{ass:local}, for $b$ in a neighborhood of $0$,
$
\varphi_{i,t}(b)
=
\frac12\, b^\top \Sigma_{i,t} b + O(\|b\|^3),
$
where
$
\Sigma_{i,t}
=
H^{(t)}_{i,bb}
-
H^{(t)}_{i,ba}(H^{(t)}_{i,aa})^{-1}H^{(t)}_{i,ab}
$
is the Schur complement of the Hessian block matrix in the coordinates $(a,b)$.
\end{theorem}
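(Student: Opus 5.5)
Since $Q=[U_t\;V_t]$ is orthogonal, I will work in the rotated coordinates $(a,b)$ and use the Hessian of $g_{i,t}$ at $(0,0)$:
$$
\nabla^2 g_{i,t}(0,0)=Q^\top H_i Q=\begin{pmatrix} H^{(t)}_{i,aa} & H^{(t)}_{i,ab}\\ H^{(t)}_{i,ba} & H^{(t)}_{i,bb}\end{pmatrix}.
$$
This matrix is positive definite because $H_i\succ 0$. In particular $H^{(t)}_{i,aa}\succ 0$, and both the block inverse and the Schur complement $\Sigma_{i,t}$ are well defined and positive definite.

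The Lipschitz condition in Assumption~\ref{ass:local} transfers to $g_{i,t}$ with the same constant $\rho_i$ on the ball of radius $r_i$, since $Q$ is an isometry. Taylor's theorem with integral remainder then gives
$$
g_{i,t}(a,b)-f(\theta_i^\star)=\tfrac12 (a,b)^\top Q^\top H_iQ\,(a,b)+R(a,b),\qquad |R(a,b)|\le \tfrac{\rho_i}{6}\|(a,b)\|^3 .
$$
Here I use $\nabla f(\theta_i^\star)=0$.

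The proof then proceeds in three steps.

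\textbf{Step 1: the quadratic model.} For fixed $b$, the quadratic part is minimized exactly at
$$
a^\ast(b)=-(H^{(t)}_{i,aa})^{-1}H^{(t)}_{i,ab}\,b,
$$
with minimal value $\tfrac12 b^\top\Sigma_{i,t}b$. Completing the square also gives, for every $a$,
$$
q(a,b)=\tfrac12 b^\top\Sigma_{i,t}b+\tfrac12 (a-a^\ast(b))^\top H^{(t)}_{i,aa}(a-a^\ast(b)).
$$

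\textbf{Step 2: upper bound.} Plug $a=a^\ast(b)$ into $g_{i,t}$. Since $\|a^\ast(b)\|\le C\|b\|$, the remainder is $O(\|b\|^3)$. This yields $\varphi_{i,t}(b)\le \tfrac12 b^\top\Sigma_{i,t}b+O(\|b\|^3)$.

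\textbf{Step 3: lower bound.} This is the main obstacle. The minimum over $a$ in the definition of $\varphi_{i,t}$ is taken over all of $\R^{K_t}$, whereas the Taylor control only holds near the origin.

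I resolve this as the appendix's setup implicitly intends: the minimization over $a$ is understood within the local neighborhood $B(\theta_i^\star,r_i)$, consistent with the local basin $\mathcal{L}_{i,\varepsilon}$. Within this ball I show that near-minimizers satisfy $\|a\|=O(\|b\|)$, as follows.

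Let $\lambda=\lambda_{\min}(H_i)>0$. Then
$$
g_{i,t}(a,b)-f(\theta_i^\star)\ \ge\ \tfrac{\lambda}{2}\|(a,b)\|^2-\tfrac{\rho_i}{6}\|(a,b)\|^3\ \ge\ \tfrac{\lambda}{4}\|(a,b)\|^2
$$
whenever $\|(a,b)\|\le \min(r_i,3\lambda/(2\rho_i))$. By Step 2 the minimal value is at most $C'\|b\|^2$, so any competitive $a$ satisfies $\tfrac{\lambda}{4}\|a\|^2\le C'\|b\|^2$, that is, $\|a\|\le C''\|b\|$.

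For such $a$ the remainder is $O(\|b\|^3)$. The completed-square identity from Step 1 then gives
$$
g_{i,t}(a,b)-f(\theta_i^\star)\ge \tfrac12 b^\top\Sigma_{i,t}b-O(\|b\|^3).
$$
Combined with Step 2, this proves the expansion.

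If one prefers a smooth statement, an alternative is the implicit function theorem applied to $\nabla_a g_{i,t}(a,b)=0$, which is justified by $H^{(t)}_{i,aa}\succ0$. It gives a local $C^1$ minimizer $a(b)=a^\ast(b)+O(\|b\|^2)$, and substituting this into the expansion gives the same result. The localization issue is identical under either route, so I would state it explicitly as part of the definition of $\varphi_{i,t}$.
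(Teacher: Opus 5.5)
Your argument is essentially correct, but it follows a different route from the paper.

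\textbf{The paper's route.} The paper applies the implicit function theorem to $\nabla_a g(a,b)=0$ to obtain a $C^1$ critical-point map $a(b)$ with $a(b)=-H_{aa}^{-1}H_{ab}b+O(\|b\|^2)$. It then simply \emph{defines} $\varphi_{i,t}(b):=g(a(b),b)-g(0,0)$ and substitutes into the cubic Taylor expansion. It never checks that this critical value coincides with the $\min_a$ in the original definition of $\varphi_{i,t}$.

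\textbf{Your route.} You get the upper bound from the explicit quadratic minimizer $a^\ast(b)$. You get the lower bound by completing the square, after showing that near-minimizers satisfy $\|a\|=O(\|b\|)$. This avoids the implicit function theorem entirely and proves the expansion for the (localized) minimum itself. You are also right that some localization is unavoidable: over all of $\R^{K_t}$ the slice may run into another, deeper basin, so $\varphi_{i,t}$ as literally defined could be negative or unbounded below.

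\textbf{Comparison.} The paper's route gives a smooth minimizer selection $a(b)$ and is shorter. Yours is elementary, and it supplies exactly the step the paper leaves implicit: that replacing $\min_a$ by the local critical value is legitimate.

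\textbf{One detail to repair.} You localize the minimization to $B(\theta_i^\star,r_i)$. However, your bound $g-f(\theta_i^\star)\ge\tfrac{\lambda}{4}\|(a,b)\|^2$ only holds for $\|(a,b)\|\le\bar r_i:=\min(r_i,3\lambda/(2\rho_i))$. So competitors in the shell $\bar r_i<\|(a,b)\|\le r_i$ are not yet excluded when you conclude $\|a\|\le C''\|b\|$. There are two fixes:
\begin{itemize}
\item Localize to the ball of radius $\bar r_i$ from the start. This is legitimate, since the localization is a convention anyway, and for small $b$ both $a^\ast(b)$ and all competitive points lie well inside that ball.
\item Add a strictly positive lower bound for $f-f(\theta_i^\star)$ on the shell. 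This requires $\theta_i^\star$ to be the unique minimizer of $f$ on the closed ball, which is an extra condition on $r_i$ not implied by Assumption~\ref{ass:local}. For example, $f(x)=x^2/2-\rho x^3/6$ has $\rho$-Lipschitz second derivative but returns to $f(0)$ at $x=3/\rho$.
\end{itemize}
The first fix is the cleaner one.
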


This matrix $\Sigma_{i,t}$ is the central geometric object in the analysis. It measures how costly it is to keep the frozen coordinates away from the minimum after re-optimizing all active coordinates. When $\Sigma_{i,t}$ is small, the loss rises slowly as the frozen coordinates are perturbed, so many frozen configurations remain compatible with that minimum. In this sense, $\Sigma_{i,t}$ captures the effective flatness of the basin along the frozen directions.

\subsection{Compatibility volume and local accessibility law}

Define the compatibility set
$
\mathcal{B}_{i,t}(\varepsilon)
=
\{b\in\R^{p_t} : \varphi_{i,t}(b)\le \varepsilon\}.
$

By Theorem~\ref{thm:schur}, this set is locally approximated by the ellipsoid
$$
\left\{
b :
\frac12\,b^\top\Sigma_{i,t}b \le \varepsilon
\right\},
$$
whose volume scales like $\det(\Sigma_{i,t})^{-1/2}$. This already shows the geometric mechanism: flatter minima in the frozen directions admit a larger set of compatible frozen coordinates.

To convert this volume statement into a probability statement, one must also account for how the frozen offset $b_{i,t}^0$ is distributed. Assume that, conditional on the chosen subspace $(U_t,V_t)$, the random variable $b_{i,t}^0$ has a continuous density $p_{i,t}$ near $0$. Then the accessibility probability obeys the following local law.

\begin{theorem}[Local accessibility law]
\label{thm:accessibility}
Under Assumption~\ref{ass:local}, assume that the frozen offset
$
b_{i,t}^0 = V_t^\top(\theta^0-\theta_i^\star)
$
has a continuous density $p_{i,t}$ in a neighborhood of $0$, conditional on $(U_t,V_t)$. Then, as $\varepsilon\to 0$,
$$
\mathbb{P}\!\left(\mathcal{A}_t\cap\mathcal{L}_{i,\varepsilon}\neq\emptyset \,\middle|\, U_t,V_t\right)
=
p_{i,t}(0)\,
\kappa_{p_t}(2\varepsilon)^{p_t/2}
\det(\Sigma_{i,t})^{-1/2}
\bigl(1+o(1)\bigr),
$$
where $\kappa_{p_t}$ denotes the volume of the unit ball in $\R^{p_t}$.
\end{theorem}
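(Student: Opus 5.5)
The plan is to reduce accessibility to a statement about the frozen offset alone, then integrate its density over a small sublevel set of the reduced function. As noted before the statement, basin $i$ is $\varepsilon$-accessible at stage $t$ exactly when $\varphi_{i,t}(b_{i,t}^0)\le\varepsilon$. One caveat: the minimization over $a$ must be restricted so that $\theta_i^\star+U_ta+V_tb$ stays in $B(\theta_i^\star,r_i)$, consistent with the definition of $\mathcal{L}_{i,\varepsilon}$. Under this restriction, the conditional probability in the statement is
$$
\int_{\mathcal{B}_{i,t}(\varepsilon)} p_{i,t}(b)\,db .
$$
The proof therefore consists of two asymptotic steps as $\varepsilon\to0$: showing that $\mathcal{B}_{i,t}(\varepsilon)$ shrinks to $0$ and matches the ellipsoid $E_\varepsilon:=\{b:\tfrac12 b^\top\Sigma_{i,t}b\le\varepsilon\}$ to first order in volume, and replacing $p_{i,t}$ by $p_{i,t}(0)$.

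\textbf{Localization and sandwiching.} I would first use Proposition~\ref{prop:local-ellipsoid}. If $\varphi_{i,t}(b)\le\varepsilon$, some point $\theta_i^\star+U_ta+V_tb$ lies in $\mathcal{L}_{i,\varepsilon}$, hence satisfies $\frac12\delta^\top H_i\delta\le c_{2,i}\varepsilon$. Since $b=V_t^\top\delta$, this gives $\|b\|\le C\sqrt{\varepsilon}$ with $C$ depending on $\lambda_{\min}(H_i)$. So $\mathcal{B}_{i,t}(\varepsilon)\subset B(0,C\sqrt\varepsilon)$, and for small $\varepsilon$ we are inside the neighborhood where Theorem~\ref{thm:schur} applies.

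On that ball I would write $|\varphi_{i,t}(b)-\tfrac12 b^\top\Sigma_{i,t}b|\le M\|b\|^3$. Since $\Sigma_{i,t}\succ0$ (a Schur complement of a positive-definite matrix) and $\|b\|=O(\sqrt\varepsilon)$, the cubic error is at most $M' \varepsilon^{1/2}\cdot\tfrac12 b^\top\Sigma_{i,t}b$. This yields the sandwich
$$
E_{\varepsilon(1-\eta_\varepsilon)}\subset\mathcal{B}_{i,t}(\varepsilon)\subset E_{\varepsilon(1+\eta_\varepsilon)},\qquad \eta_\varepsilon=O(\sqrt\varepsilon).
$$
The inner inclusion also needs the optimal $a$ to stay in the ball. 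This holds because the minimizer of the quadratic model is $a=-(H_{aa})^{-1}H_{ab}b=O(\|b\|)$, and the implicit-function argument underlying Theorem~\ref{thm:schur} keeps the true minimizer at $O(\|b\|)$.

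\textbf{Volume and density.} The change of variables $u=\Sigma_{i,t}^{1/2}b/\sqrt{2\varepsilon}$ gives $\Vol(E_\varepsilon)=\kappa_{p_t}(2\varepsilon)^{p_t/2}\det(\Sigma_{i,t})^{-1/2}$. The sandwich therefore gives
$$
\Vol(\mathcal{B}_{i,t}(\varepsilon))=\Vol(E_\varepsilon)\bigl(1+O(\sqrt\varepsilon)\bigr),
$$
since $(1\pm\eta_\varepsilon)^{p_t/2}\to1$. Continuity of $p_{i,t}$ at $0$ together with $\mathcal{B}_{i,t}(\varepsilon)\subset B(0,C\sqrt\varepsilon)$ gives
$$
\sup_{b\in\mathcal{B}_{i,t}(\varepsilon)}|p_{i,t}(b)-p_{i,t}(0)|\to0 .
$$
Hence the integral equals $(p_{i,t}(0)+o(1))\Vol(\mathcal{B}_{i,t}(\varepsilon))$. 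This gives the claimed formula when $p_{i,t}(0)>0$. If $p_{i,t}(0)=0$, the $(1+o(1))$ form should be read as a $o(\varepsilon^{p_t/2})$ statement.

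\textbf{Main obstacle.} I expect the main difficulty to be the localization step: making the $O(\|b\|^3)$ remainder uniform, and justifying that $\varphi_{i,t}$ as a global minimum over $a$ agrees near $0$ with the local reduced function from Theorem~\ref{thm:schur}. My first attempt would use the inclusions of Proposition~\ref{prop:local-ellipsoid} on the full $\delta=(a,b)$ to confine every relevant $a$ to $O(\sqrt\varepsilon)$. I would then apply the Lipschitz-Hessian Taylor bound $|f(\theta_i^\star+\delta)-f(\theta_i^\star)-\tfrac12\delta^\top H_i\delta|\le\tfrac{\rho_i}{6}\|\delta\|^3$ directly. Minimizing both sides over $a$ transfers this bound to $\varphi_{i,t}$ versus $\tfrac12 b^\top\Sigma_{i,t}b$, which bypasses the need for any finer regularity of $\varphi_{i,t}$.
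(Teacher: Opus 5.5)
Your proposal is correct and follows the same route as the paper. You express the conditional probability as $\int_{\mathcal{B}_{i,t}(\varepsilon)}p_{i,t}(b)\,db$, sandwich $\mathcal{B}_{i,t}(\varepsilon)$ between the ellipsoids $\{\tfrac12 b^\top\Sigma_{i,t}b\le(1\pm C\sqrt{\varepsilon})\varepsilon\}$ (the paper's Lemma~\ref{lem:compatibility-ellipsoid}), use the exact ellipsoid volume $\kappa_{p_t}(2\alpha)^{p_t/2}\det(\Sigma_{i,t})^{-1/2}$, and replace $p_{i,t}$ by $p_{i,t}(0)$ by continuity. Your extra care tightens steps the paper passes over quickly: restricting the minimization over $a$ to the ball (so accessibility is genuinely equivalent to $\varphi_{i,t}(b_{i,t}^0)\le\varepsilon$), localizing $\mathcal{B}_{i,t}(\varepsilon)\subset B(0,C\sqrt{\varepsilon})$ through Proposition~\ref{prop:local-ellipsoid} instead of the paper's sphere-positivity step (which alone does not rule out small values of $\varphi_{i,t}$ far from $0$), and flagging the case $p_{i,t}(0)=0$.
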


The accessibility probability therefore decomposes into two factors. The density term $p_{i,t}(0)$ measures how likely the frozen coordinates are to place the affine slice near the minimum at all, while the determinant term $\det(\Sigma_{i,t})^{-1/2}$ measures how large the set of compatible frozen coordinates is once the slice is nearby. The second term is the flatness contribution: all else being equal, minima with smaller effective frozen-direction curvature are accessible under a larger set of frozen constraints.

In the common case of isotropic Gaussian initialization, the density term decreases with the squared distance between the frozen projection of the minimum and the initialization, while the determinant term rewards broad minima. Progressive growth therefore creates an energy--entropy tradeoff: minima that are both close to the initialization in frozen coordinates and broad in those directions are favored.

\subsection{Interpretation of the Theoretical Results}

Theorems~\ref{thm:schur} and~\ref{thm:accessibility} do not claim that stage-wise gradient descent always converges to the globally flattest minimum. What they show is more directly tied to progressive growth: under partial freezing, flatter minima remain accessible under a larger fraction of constraints. This means that early growth stages impose the most severe restrictions, since many coordinates are still frozen. At those stages, narrow minima are fragile: a modest mismatch in the frozen coordinates is enough to make them inaccessible. Broad minima, by contrast, tolerate a wider range of frozen values and therefore remain reachable even under strong restrictions. As the model grows and the constraints are gradually relaxed, these broad minima are more likely to persist across stages. This is the precise sense in which progressive growth biases optimization toward flatter minima.

The theory yields three concrete predictions. First, flatter regions should appear earlier along the growth trajectory, because broad minima remain accessible under stronger constraints than narrow ones. Second, once the optimization trajectory enters such a broad basin, subsequent growth steps should tend to preserve it rather than force optimization to leave it, leading to strong stage-to-stage retention and limited immediate degradation after expansion. Third, if consecutive stages remain within the same broad basin, then the corresponding solutions should be connected by low-loss paths, so interpolation between stage solutions should exhibit small barriers. In the next section, we test these predictions using curvature diagnostics, retention and leakage metrics, and interpolation barriers between successive stage-wise solutions.
\section{Empirical Investigation}
\label{sec:empirical}

In this section, we evaluate the predicted geometric bias first in controlled toy landscapes,
where basin geometry is known by construction, and then in ResNet/CIFAR-100
experiments, where we test whether the same qualitative signatures persist in
realistic deep networks. Additional experimental details and results are provided in the Appendix~\ref{app:exp_details}.

\subsection{Toy validation of the growth-induced flatness bias}
\label{sec:toy}

We first validate the theoretical mechanism in a controlled setting where basin geometry is fully known. Rather than introducing optimizer-specific effects, these toy experiments isolate the accessibility bias induced by progressive growth itself. We consider synthetic quadratic basins in dimension $d=100$, each defined by a center $\mu_i$ and a positive-definite curvature matrix $H_i$, and evaluate how often a basin is selected under nested frozen-coordinate constraints. For each trial, we sample a random initialization and a random orthogonal basis, apply a growth schedule with stage counts $S\in\{1,5,10,20,50,100\}$, and record the selected basin. Repeating this procedure over 500 trials yields an empirical selection distribution over basin families. The results are shown in Fig.~\ref{fig:toy_main}.

\begin{figure*}[h]
    \centering
    \begin{subfigure}[t]{0.46\textwidth}
        \centering
\includegraphics[width=\textwidth]{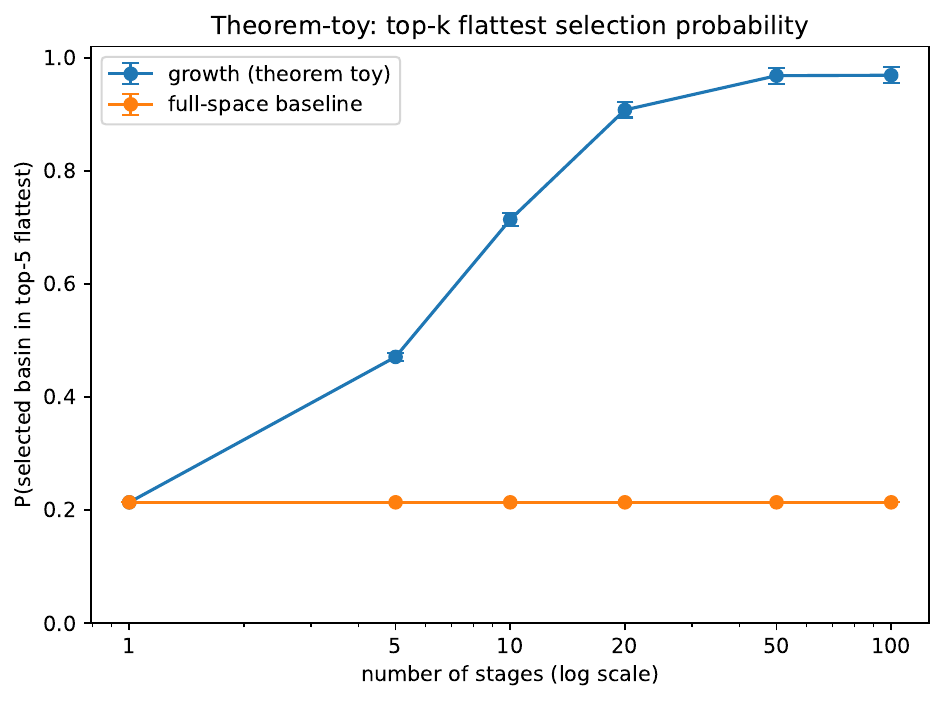}
        \caption{}
        \label{fig:toy_mult_topk}
    \end{subfigure}
    \hfill
    \begin{subfigure}[t]{0.46\textwidth}
        \centering
\includegraphics[width=\textwidth]{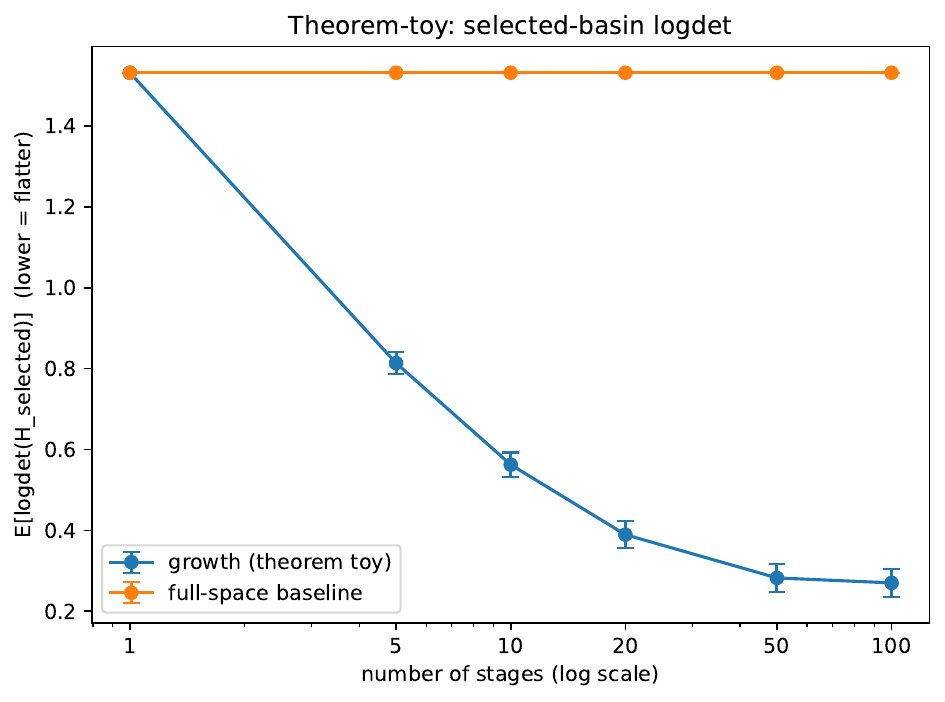}
        \caption{}
        \label{fig:toy_mult_logdet}
    \end{subfigure}
    \begin{subfigure}[t]{0.46\textwidth}
        \centering
\includegraphics[width=\textwidth]{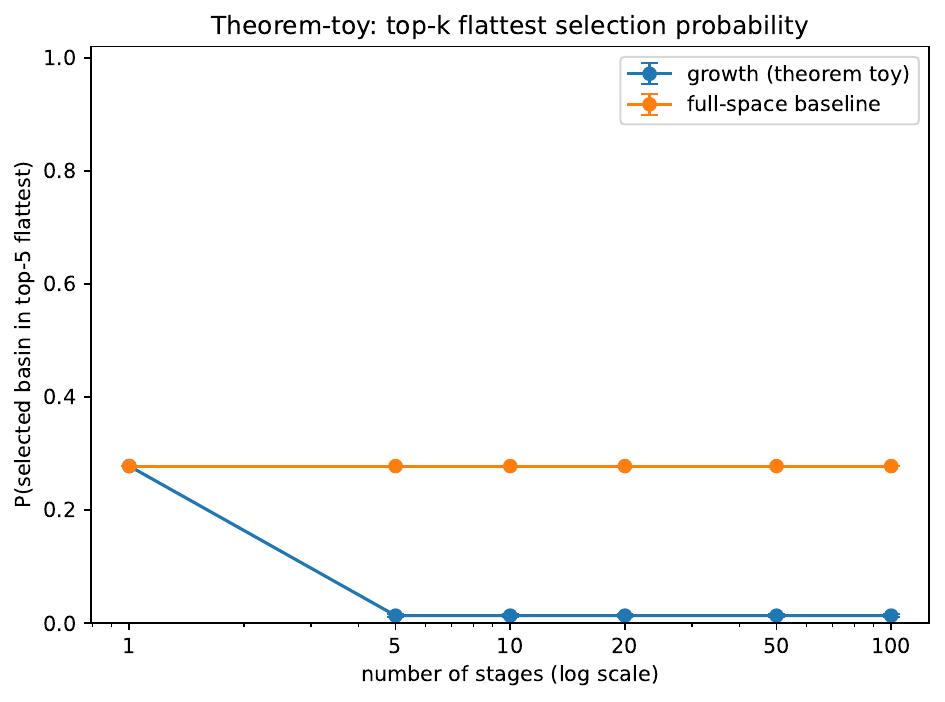}
        \caption{}
        \label{fig:toy_trade_topk}
    \end{subfigure}
    \hfill
    \begin{subfigure}[t]{0.46\textwidth}
        \centering
\includegraphics[width=\textwidth]{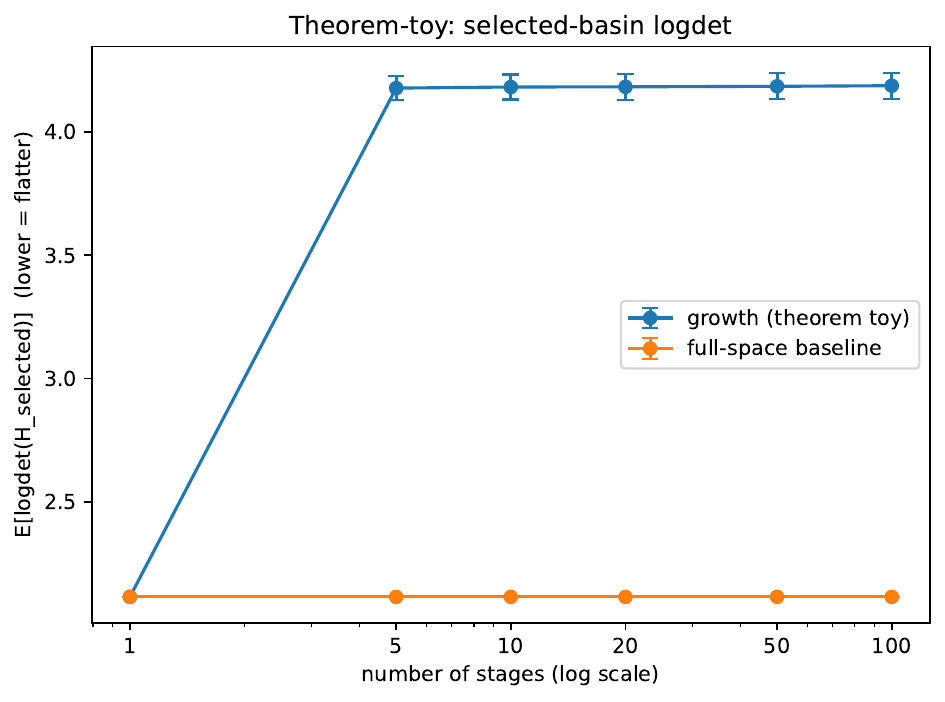}
        \caption{}
\label{fig:toy_trade_logdet}
    \end{subfigure}

    \caption{\textbf{Toy validation of the growth bias under controlled basin geometry.}
    Top row: multiplicity regime. Bottom row: energy--entropy trade-off regime.
    Left column: probability of selecting one of the top-$5$ flattest basins as a function of the number of growth stages.
    Right column: expected curvature of the selected basin, measured by $\log\det(H)$.
    In the multiplicity regime, progressive growth increasingly selects flatter basins. In the trade-off regime, this trend reverses when sharp basins are closer to initialization, illustrating the energy--entropy competition predicted by the theory.}
    \label{fig:toy_main}
\end{figure*}

We study two regimes. In the \emph{multiplicity} regime, flat basins are more numerous while basin locations are balanced relative to initialization, so the dominant effect should come from curvature. In the \emph{energy--entropy trade-off} regime, sharp basins are placed closer to initialization, while flat basins retain the entropy advantage associated with larger compatibility volume. The first regime therefore tests the pure flatness effect predicted by the theory, whereas the second tests whether this effect can be overcome by a sufficiently favorable location term.

In the multiplicity regime, increasing the number of growth stages strongly shifts selection toward flatter basins. The probability of selecting one of the top-$5$ flattest minima increases monotonically with the number of stages (Fig.~\ref{fig:toy_mult_topk}), while the expected curvature of the selected basin, measured by $\log\det(H)$, decreases accordingly (Fig.~\ref{fig:toy_mult_logdet}). This is exactly the behavior predicted by the accessibility law: repeated freezing and release acts as a geometric filter that disproportionately preserves broad minima.

In contrast, the trade-off regime shows that the flatness bias is not unconditional. When sharp basins are substantially closer to initialization, the trend reverses: progressive growth increasingly selects nearby sharp basins rather than distant flat ones. This is visible both in the collapse of the top-$5$ flat-basin selection probability (Fig.~\ref{fig:toy_trade_topk}) and in the corresponding increase in the curvature of the selected basin (Fig.~\ref{fig:toy_trade_logdet}). This reversal is consistent with the theory, which predicts an energy--entropy competition between distance-to-initialization and compatibility volume.
Taken together, these toy experiments validate the central theoretical claim. Progressive growth does induce a genuine bias toward flatter minima, but this bias operates through accessibility under frozen constraints rather than through an unconditional preference for flatness. When basin locations are comparable, the flatness term dominates and growth amplifies selection toward broad minima. When location and curvature conflict, the observed selection follows the predicted trade-off.

\subsection{Does the bias translate to deep learning models?}
\label{sec:resnet}

The toy experiments validate the mechanism in a controlled setting where basin geometry is known by construction. We now ask whether the same qualitative bias remains visible in realistic deep networks. To this end, we study progressive growth in ResNet models trained on CIFAR-100. 
Since the theory concerns accessibility under frozen constraints rather than predictive performance per se, we focus not only on final accuracy but also on transition-level quantities. For two consecutive stage solutions, we measure the \emph{interpolation barrier}, defined as the maximal excess loss along the linear interpolation between the two endpoints; \emph{retention}, which indicates whether the post-expansion solution remains in the same low-barrier regime as the pre-expansion one; \emph{leakage}, which quantifies how strongly the transition departs from this retained regime when the barrier is non-negligible; and restricted curvature on the active and newly released subspaces. Together, these metrics test whether progressive growth preserves the current basin across expansions and whether the newly released directions are geometrically broad.
Their formal definitions are given in the Appendix~\ref{app:transition_metrics}.

The main results are summarized in Table~\ref{tab:resnet_main}. The primary baseline is a standard ResNet-18 trained without freezing or growth: all parameters are active from initialization, with the same data split, optimizer, training budget, batch sizes, augmentation, and learning-rate schedule as the growth runs. From the perspective of final predictive performance, progressive growth is not beneficial in this setting. The full-model baseline reaches the best test accuracy, while all growth schedules perform slightly worse. This trend is shown more explicitly in Fig.~\ref{fig:resnet_perf_patience} (left). With validation-accuracy-based growth, the final test accuracy decreases from $74.99\%$ at $S=3$ to $74.94\%$ at $S=5$ and $73.94\%$ at $S=10$. A similar trend appears for training-loss-based growth, which starts slightly higher at $75.54\%$ for $S=3$ but then drops to $74.40\%$ at $S=5$ and $73.37\%$ at $S=10$. Thus, progressive growth does not improve final generalization when judged solely by the final test metric.

However, the geometric behavior of the resulting trajectories is much more structured. For moderate schedules, consecutive stage solutions often remain in the same broad low-loss region. This is especially clear for the mild training-loss schedule with $S=3$, for which the mean interpolation barrier is essentially zero and the mean retention is $1.0$, indicating near-perfect stage-to-stage preservation. Validation-accuracy-based growth is also consistently stable: its mean barriers remain small across all stage counts, and its mean retention is high, increasing from $0.83$ at $S=3$ to $0.92$ at $S=5$ and $0.96$ at $S=10$. Thus, although growth does not improve the final predictor, moderate schedules often preserve the current broad region rather than forcing optimization into a different basin.

\begin{figure}[h]
    \centering

    \begin{subfigure}[t]{0.35\linewidth}
        \centering
        \includegraphics[width=\linewidth]{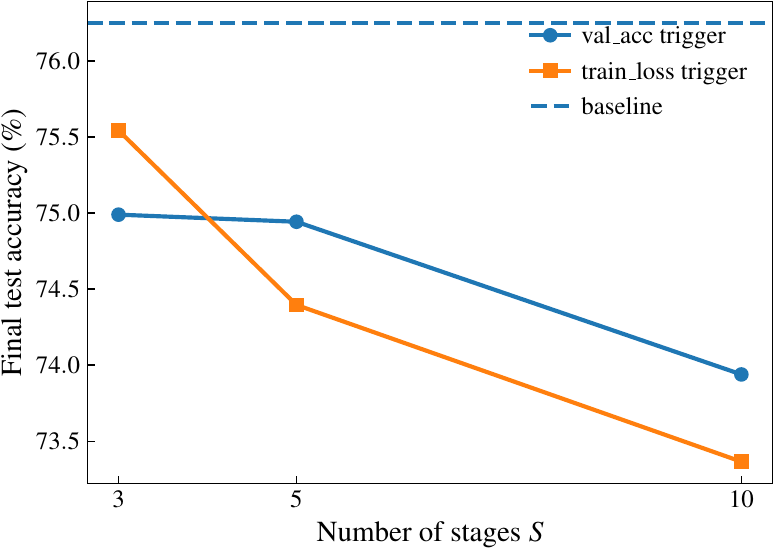}
        \caption{}
        \label{fig:resnet_testacc}
    \end{subfigure}
    \hfill
    \begin{subfigure}[t]{0.62\linewidth}
        \centering
        \includegraphics[width=\linewidth]{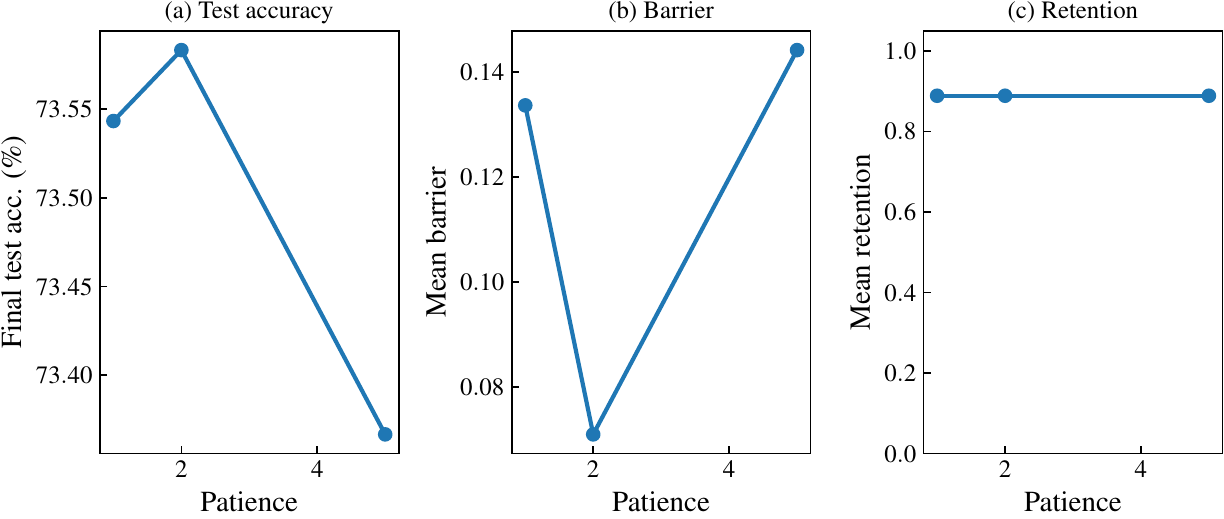}
        \caption{}
        \label{fig:resnet_patience}
    \end{subfigure}

    \caption{\textbf{Predictive performance and patience sensitivity under progressive growth.}
    \textbf{Left:} Final test accuracy as a function of the number of growth stages. Increasing the number of stages does not improve performance over the full-model baseline and generally reduces final test accuracy.
    \textbf{Right:} Patience ablation for the aggressive $S=10$ training-loss schedule. Patience has only a mild effect on final accuracy, but changes the mean transition barrier more noticeably, indicating that it mainly affects stage-to-stage geometric stability.}
    \label{fig:resnet_perf_patience}
\end{figure}

The contrast between the two triggers becomes more pronounced as the number of stages increases. Validation-accuracy-based growth remains geometrically mild even at larger $S$, with barriers staying low and retention remaining high. By contrast, training-loss-based growth becomes markedly less stable once the schedule is more fragmented. The mean barrier increases sharply from $0.00$ at $S=3$ to $0.125$ at $S=5$ and $0.144$ at $S=10$, while retention drops from $1.0$ to $0.75$ and then partially recovers to $0.89$. This suggests that training-loss triggering is more prone to letting the optimization drift away from the current solution once new degrees of freedom are released, especially when growth is split into many small stages. These trends are summarized in Fig.~\ref{fig:resnet_geometry}(a,b).

\begin{table}[h]
\centering
\caption{\textbf{ResNet-18 on CIFAR-100.} Progressive growth does not improve final test accuracy over the full-model baseline, but it induces clear geometric effects. The full-model baseline uses the exact same ResNet-18 architecture and training
budget, but without any freezing or growth: all parameters are active from
initialization. We report mean $\pm$ standard deviation over 3 seeds. Barrier values are reported in units of $10^{-2}$. For moderate schedules, consecutive stages are often strongly retained and barriers remain small. Across all schedules, the newly released directions are substantially flatter than the already active ones, as shown by the ratio $\lambda_{\max}^{\text{new}}/\lambda_{\max}^{\text{act}}<1$.}
\resizebox{\linewidth}{!}{
\begin{tabular}{llccccc}
\toprule
Method & Trigger & $S$ & Test acc. (\%) $\uparrow$ & Barrier ($\times 10^{-2}$) $\downarrow$ & Retention $\uparrow$ & $\lambda_{\max}^{\text{new}}/\lambda_{\max}^{\text{act}} \downarrow$ \\
\midrule
Full ResNet-18 &  & 1 & $76.25 \pm 0.23$ & -- & -- & -- \\
\midrule
Growth & val\_acc    & 3  & $74.99 \pm 0.48$ & $3.37 \pm 3.44$ & $0.833 \pm 0.289$ & $0.313 \pm 0.056$ \\
Growth & val\_acc    & 5  & $74.94 \pm 0.70$ & $3.82 \pm 5.93$ & $0.917 \pm 0.144$ & $0.201 \pm 0.014$ \\
Growth & val\_acc    & 10 & $73.94 \pm 0.86$ & $2.67 \pm 4.26$ & $0.963 \pm 0.064$ & $0.0896 \pm 0.0058$ \\
\midrule
Growth & train\_loss & 3  & $75.54 \pm 0.07$ & $0.00 \pm 0.00$ & $1.000 \pm 0.000$ & $0.305 \pm 0.015$ \\
Growth & train\_loss & 5  & $74.40 \pm 0.58$ & $12.47 \pm 4.16$ & $0.750 \pm 0.000$ & $0.174 \pm 0.014$ \\
Growth & train\_loss & 10 & $73.37 \pm 0.23$ & $14.42 \pm 1.64$ & $0.889 \pm 0.000$ & $0.0859 \pm 0.0078$ \\
\bottomrule
\end{tabular}}
\label{tab:resnet_main}
\end{table}

To further test whether these transition-level quantities are genuinely informative, we performed a post-hoc analysis at the level of individual growth steps across the main ResNet experiments. The results are shown in Fig.~\ref{fig:resnet_posthoc_transition}. Each point corresponds to a single stage transition from the main ResNet runs. We observe that transitions with larger interpolation barriers also tend to exhibit larger absolute endpoint gaps after expansion, i.e., larger absolute differences between the losses at the pre- and post-expansion endpoints of the transition, with a Spearman correlation of approximately $0.31$. Likewise, transitions with larger leakage tend to produce larger endpoint changes, with a Spearman correlation of approximately $0.38$. These trends indicate that the proposed transition metrics are not merely descriptive geometric summaries: they are predictive of how strongly a growth step perturbs the current solution in practice.
\begin{figure}[t]
    \centering
    \includegraphics[width=\linewidth]{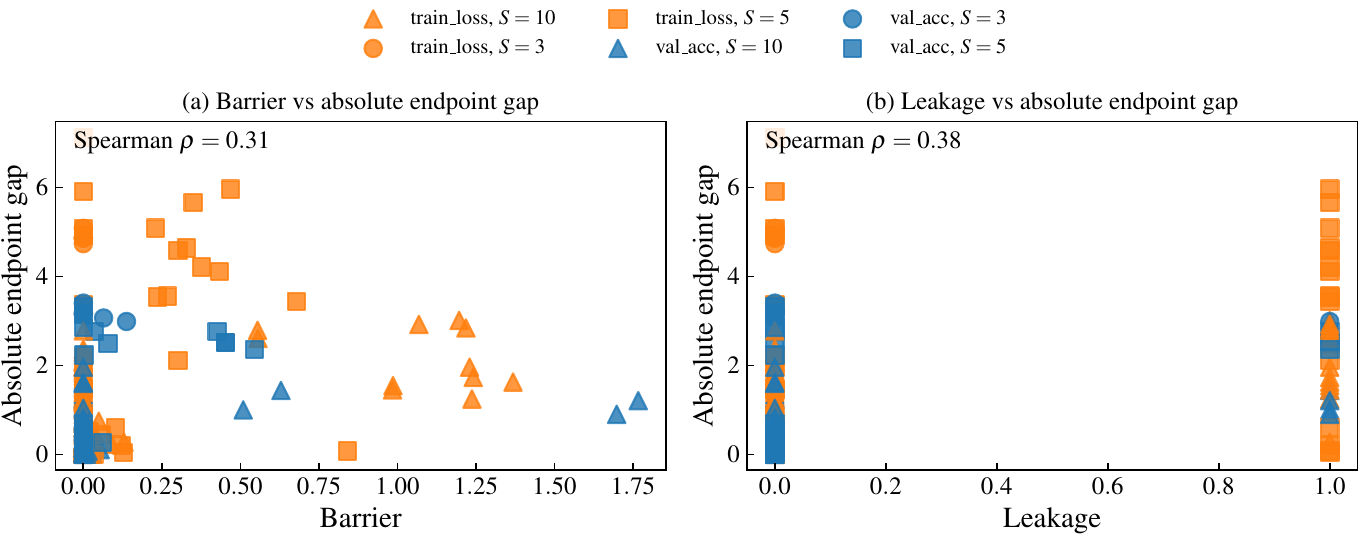}
    \caption{\textbf{Analysis of transition quality in the ResNet experiments.}
    Each point corresponds to a single stage transition. Left: interpolation barrier versus absolute endpoint gap. Right: leakage versus absolute endpoint gap. In both cases, more disruptive transitions are associated with larger endpoint changes after expansion, showing that the proposed transition-level metrics are predictive of how strongly a growth step perturbs the current solution.}
    \label{fig:resnet_posthoc_transition}
\end{figure}

A second robust pattern concerns curvature. Across all schedules and both triggers, the newly released directions are systematically flatter than the already active ones. This is visible in Fig.~\ref{fig:resnet_geometry}(c), where the ratio $\lambda_{\max}^{\mathrm{new}}/\lambda_{\max}^{\mathrm{act}}$ remains far below $1$ in every configuration. For both triggers, this ratio is about $0.31$ at $S=3$, drops to roughly $0.20$ for validation-accuracy growth and $0.17$ for training-loss growth at $S=5$, and reaches about $0.09$ at $S=10$. Thus, when growth unlocks new parameters, it typically reveals directions of substantially lower local curvature than those already optimized. This observation is highly consistent with the constrained-slice view developed in Section~\ref{sec:theory}: the newly accessible degrees of freedom are precisely those that remain compatible with the current solution under constraint relaxation, and in practice they correspond to relatively broad directions of the loss landscape.

\begin{figure*}[t]
    \centering
    \includegraphics[width=0.96\linewidth]{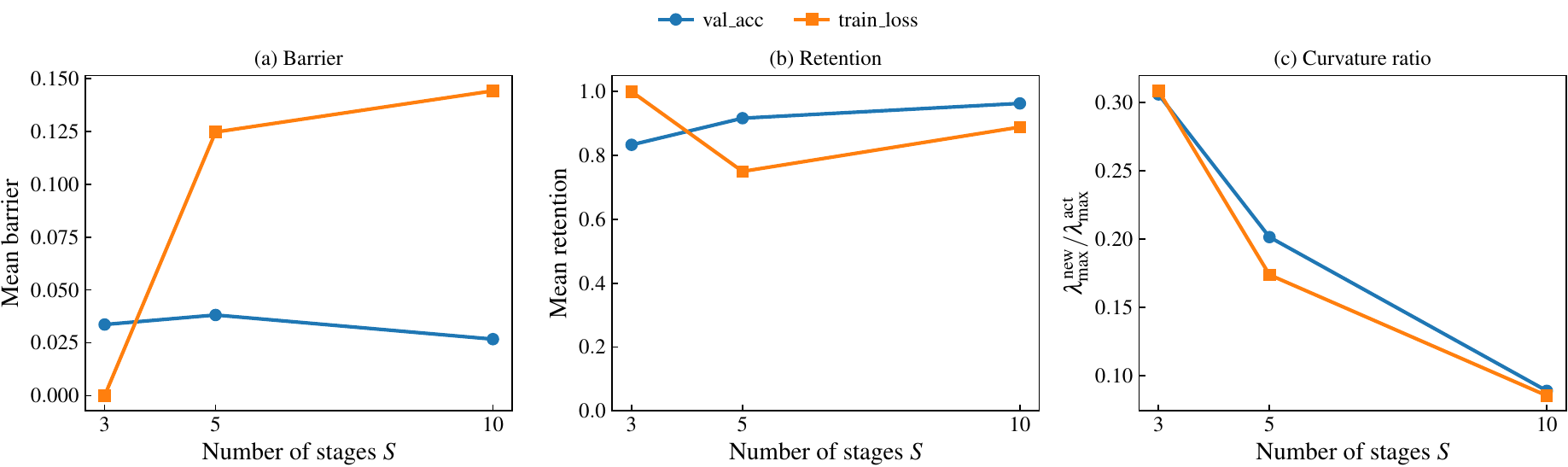}
    \caption{\textbf{Transition geometry under progressive growth in ResNet-18 on CIFAR-100.}
    (a) Mean interpolation barrier between consecutive stage solutions. (b) Mean retention across stage transitions. (c) Ratio between the top Hessian eigenvalue on the newly released subspace and that on the already active subspace. Validation-accuracy-based growth yields consistently mild transitions with low barriers, while training-loss-based growth becomes less stable as the number of stages increases. Across all schedules, the newly released directions are substantially flatter than the already active ones, as shown by $\lambda_{\max}^{\mathrm{new}} / \lambda_{\max}^{\mathrm{act}} \ll 1$.}
    \label{fig:resnet_geometry}
\end{figure*}

To better understand the most aggressive regime, we also performed a patience ablation for the training-loss schedule at $S=10$, shown in Fig.~\ref{fig:resnet_perf_patience} (right). Here, patience denotes the number of consecutive epochs without improvement of the stage metric before triggering the next growth step. The final test accuracy changes only mildly across patience values, whereas the mean transition barrier varies more noticeably. In particular, patience $2$ yields the lowest mean barrier, while patience $1$ and $5$ lead to larger barriers despite very similar final accuracies. Retention remains essentially unchanged. This suggests that patience mainly modulates stage-to-stage geometric stability rather than final predictive performance. In other words, the timing of expansion affects how smoothly the optimization trajectory adapts to newly released parameters, even when the final accuracy remains largely unchanged.
Taken together, these results provide a nuanced answer to the motivating question. The bias predicted by the theory does translate to deep learning models, but primarily as a geometric effect rather than as a direct improvement in final accuracy. Progressive growth leaves a clear footprint on optimization trajectories: moderate schedules preserve consecutive stage solutions, interpolation barriers remain small, and newly released subspaces are consistently flatter than the already active ones. In this sense, what translates to deep networks is not a guaranteed gain in predictive performance, but a systematic tendency for growth to preserve broad regions and to reveal comparatively flat directions as the trainable subspace expands.

\subsection{Delayed release versus progressive relaxation}

The experiments above compare full-model training with several progressive
growth schedules. To isolate the role of gradual constraint relaxation, we
consider two internal controls. The first one, fixed-subspace training, keeps the
initial trainable subspace fixed throughout optimization. The second one, one-shot unfreezing, starts from the same type of constrained
submodel as progressive growth and uses the same growth criterion. When the
criterion triggers expansion, it releases all remaining parameters in a single
step rather than unlocking them progressively. Thus, one-shot unfreezing controls for delayed capacity release while
removing the gradual relaxation mechanism.

\begin{table}[t]
\centering
\caption{
Internal controls on ResNet-18/CIFAR-100. The full baseline uses the same
architecture, optimizer, data split, augmentation, epoch budget, and
learning-rate schedule, but all parameters are active from initialization.
Fixed-subspace training keeps the initial trainable subspace fixed throughout
optimization. 
One-shot unfreezing starts from the same initial constrained
submodel as progressive growth and follows the same validation-accuracy
trigger, but releases all remaining parameters in a
single transition instead of unlocking them stage by stage. These controls
separate the effect of delayed capacity release from gradual constraint
relaxation.
}
\label{tab:resnet_internal_controls}
\resizebox{\linewidth}{!}{
\begin{tabular}{lccccc}
\toprule
Method
& Test acc. (\%) $\uparrow$
& Barrier $(\times 10^{-2})$ $\downarrow$
& Retention $\uparrow$
& Leakage $\downarrow$
& $\lambda_{\max}^{\mathrm{new}}/\lambda_{\max}^{\mathrm{act}}$ $\downarrow$ \\
\midrule
Full ResNet-18
& $76.25 \pm 0.23$
& -- & -- & -- & -- \\
Fixed subspace, $S=5$
& $3.13 \pm 0.77$
& -- & -- & -- & -- \\
One-shot unfreeze, $S=5$
& $75.84 \pm 0.33$
& $0.17 \pm 0.29$
& $1.0000 \pm 0.0000$
& $0.0000 \pm 0.0000$
& $0.7913 \pm 0.0076$ \\
\bottomrule
\end{tabular}}
\end{table}
The fixed-subspace control fails on CIFAR-100, reaching only $3.13\%$ test
accuracy. This confirms that the initial constrained model does not contain
sufficient trainable capacity by itself. One-shot unfreezing, by contrast,
recovers most of the full-model performance and yields almost zero interpolation
barrier. However, it does not exhibit the same curvature signature as progressive
growth: its newly released directions have a restricted curvature ratio
$\lambda_{\max}^{\mathrm{new}}/\lambda_{\max}^{\mathrm{act}}
=0.7913\pm0.0076$.

Comparing this control with the progressive-growth runs in Table~\ref{tab:resnet_main}
shows that gradual relaxation produces substantially flatter newly released
directions. For example, validation-accuracy growth with $S=5$ has
$\lambda_{\max}^{\mathrm{new}}/\lambda_{\max}^{\mathrm{act}}
=0.201\pm0.014$, and $S=10$ further reduces this ratio to
$0.0896\pm0.0058$, both far below the one-shot value. Thus, the low-curvature
signal is not merely a consequence of starting from a smaller model or delaying
capacity release; it is tied to gradual constraint relaxation. At the same time,
Table~\ref{tab:resnet_main} shows that this curvature bias does not by itself
guarantee higher final accuracy. The same qualitative contrast also appears in a fully connected
overparameterized MLP on a hard two-moons task: progressive growth releases
directions with substantially lower restricted curvature than one-shot
unfreezing at comparable test accuracy. We report this architecture-agnostic
control in the Appendix~\ref{app:mlp_control_experiment}.
\section{Discussion and Limitations}
\label{sec:discussion}

Our results support a coherent picture across theory, toy models, and deep-network experiments. Progressive growth can be viewed as a sequence of constraint relaxations, and this mechanism induces a bias toward minima that remain compatible with many frozen-coordinate configurations. In the local theory, this compatibility is governed by an effective frozen-direction curvature through a Schur-complement reduction, together with a location term determined by the initialization. In the toy experiments, this prediction is directly visible: when basin locations are balanced, progressive growth increasingly concentrates selection on flatter minima, whereas in the energy--entropy trade-off regime the bias can reverse when sharper basins are sufficiently closer to initialization. In ResNet/CIFAR-100, the same mechanism does not appear as a gain in final accuracy, but it does leave a clear geometric footprint: moderate schedules preserve consecutive stage solutions, interpolation barriers remain small, and newly released directions are systematically flatter than already active ones.
\newline
A first important implication is conceptual. Progressive growth is best understood as inducing an \emph{optimization bias}, not as guaranteeing better predictive performance. The theory does not state that flatter minima should always generalize better, nor do our experiments support such a simplistic interpretation. Instead, it identifies a bias in \emph{which basins remain accessible} as constraints are progressively relaxed. This distinction is essential, especially in light of recent work showing that local sharpness metrics alone are not reliable universal predictors of generalization in modern deep learning~\cite{2023arXiv230207011A}. Our results are consistent with that view: the geometric signal induced by growth is strong, whereas its effect on final test accuracy is subtle and, in the deep-learning setting studied here, often negative.
Together, the internal controls in Table~\ref{tab:resnet_internal_controls}
and the progressive-growth runs in Table~\ref{tab:resnet_main} make this
distinction more explicit. One-shot unfreezing recovers most of the full-model
accuracy, showing that delayed capacity release alone can be sufficient for good
predictive performance. However, one-shot unfreezing does not produce the same
low-curvature newly released directions as progressive growth. Increasing the
number of progressive stages further strengthens this curvature bias in
Table~\ref{tab:resnet_main}, although its effect on transition barriers and final
accuracy remains schedule-dependent. Thus, progressive growth should be
understood as a mechanism for shaping the geometry of the training trajectory,
not as an unconditional recipe for improving accuracy.
\newline
A second implication is methodological. To study progressive growth, transition-level quantities appear more informative than final curvature scalars alone. These diagnostics are directly tied to the notion of accessibility under frozen constraints: they indicate whether a growth step preserves the current basin or instead pushes optimization toward a different region. In our deep-network experiments, these quantities reveal a coherent geometric structure even when final accuracy differences are small. This is also supported by our post-hoc analysis, where larger barrier and leakage correlate with larger endpoint disruption after expansion. More generally, this suggests that progressive expansion should be analyzed through the stability of its stage transitions, rather than only through end-of-training metrics.
The negative predictive result is itself informative. In the present ResNet setting, progressive growth should not be interpreted as a plug-in performance heuristic. Its main effect is geometric: it changes which regions remain accessible and how stable stage-to-stage transitions remain under expansion. This also suggests that any practical benefit of growth may depend less on staged expansion alone than on the ability to detect when successive expansions remain geometrically stable. In this sense, transition metrics such as barrier and leakage may eventually serve not only as post-hoc diagnostics, but also as online signals for adapting or stopping growth. A promising direction for future work is therefore to design adaptive growth rules that explicitly trade off transition stability against predictive performance.
\newline
At the same time, the present work has several limitations. First, the theory is local and relies on isolated nondegenerate minima with positive-definite Hessians. This stylized setting is useful for deriving the accessibility law, but modern deep networks are highly over-parameterized and often exhibit symmetries or near-degenerate solution sets. Extending the analysis to degenerate minima or minimum manifolds would therefore be a natural next step. Second, our deep-network study is intentionally focused: we consider a single architecture family and a single dataset. Although the geometric trends are clear, broader empirical coverage would be required before drawing strong conclusions about the universality of the effect across architectures, optimizers, normalization schemes, or larger-scale settings. Third, our empirical curvature estimates remain local proxies. They are useful for interpreting the released directions, but they do not by themselves define basin membership. This is precisely why we complement them with transition-based diagnostics, but it also means that the geometric picture remains approximate in high dimension.
A further limitation is practical. The current study does not optimize growth as an efficiency-oriented training strategy, and in our implementation it does not provide a clear advantage over full-model training in terms of final predictive performance. Our schedules are designed primarily to probe the theoretical mechanism, not to maximize accuracy or wall-clock efficiency. As a result, growth can degrade final performance when the schedule becomes too fragmented, as seen in the ResNet experiments. This is not a contradiction of the theory; rather, it highlights that accessibility bias and predictive performance are distinct objectives.

More broadly, the present perspective suggests a different way of thinking about model growth. Instead of viewing expansion only as a capacity-management heuristic, one can view it as a tool for \emph{shaping the optimization geometry}. In that interpretation, progressive growth becomes a mechanism for filtering out minima that are fragile under partial freezing and for favoring regions that are robust to successive constraint relaxations. This viewpoint could potentially connect growth to other forms of structured training, including sparse or masked optimization, pruning-and-regrowth strategies, curriculum over parameter subsets, and multi-stage warm-start procedures.
\section{Conclusion}
\label{sec:conclusion}

We studied progressive growth as a geometric intervention on optimization. Rather than merely changing model capacity over time, growth constrains training to a sequence of nested affine slices, thereby changing which minima remain accessible. Under local regularity assumptions, we showed that this accessibility follows an energy--entropy trade-off: minima are favored when they are both compatible with the initialization in the frozen coordinates and broad along those directions. In this sense, progressive growth induces a bias toward flatter minima, understood not as a universal predictor of generalization, but as a bias in accessibility under constraint relaxation.

The experiments support this view at two complementary levels. In controlled toy landscapes, progressive growth amplifies selection toward flat minima when basin locations are balanced, and exhibits the predicted reversal when sharper basins are sufficiently closer to initialization. In ResNet/CIFAR-100, the same mechanism appears through transition geometry rather than final accuracy: moderate schedules preserve consecutive stage solutions, interpolation barriers remain small, and newly released directions are systematically flatter than already active ones.
The internal controls further separate progressive growth from simpler
delayed-release strategies. One-shot unfreezing recovers most of the full-model
accuracy but does not exhibit the same low-curvature released directions, while
fixed-subspace training fails on CIFAR-100. This supports the view that the
geometric bias arises from gradual constraint relaxation rather than merely from
starting with a smaller submodel.
Overall, our results suggest that progressive growth should be studied not only as a way of allocating capacity, but also as a way of steering optimization. This perspective helps explain why staged expansion can substantially alter the regions explored by training even when its effect on final accuracy is limited, and points toward future training strategies that explicitly exploit accessibility under structured constraint relaxation.

\begin{credits}
\subsubsection*{\ackname}
This work was granted access to the HPC resources of IDRIS under the allocation A0191016927 made by GENCI. This work has received support from the French government, managed by the National Research Agency (ANR), under the France 2030 program with the reference “PR[AI]RIE-PSAI” (ANR-23-IACL-0008), "PEPR-SHARP" (ANR-23-PEIA-0008) and project "LLM4ALL" (ANR-23-IAS1-0008).

\subsubsection{\discintname}
The authors have no competing interests to declare that are
relevant to the content of this article.
\end{credits}
%
%
%
\bibliographystyle{src/splncs04}
\bibliography{biblio}

\newpage
\appendix
\section{Proofs for Section~2}
\label{app:theory_proofs}

In this appendix we prove the results stated in our theoretical analysis. Throughout, we fix a basin index $i$ and a stage $t$ when no confusion is possible. We write
\[
f_i^\star := f(\theta_i^\star),
\qquad
H_i := \nabla^2 f(\theta_i^\star),
\]
and denote by
\[
\mu_i := \lambda_{\min}(H_i),
\qquad
L_i := \lambda_{\max}(H_i)
\]
the extreme eigenvalues of the local Hessian. For $\alpha>0$, we also introduce the quadratic ellipsoid
\[
E_{i,\alpha}
:=
\left\{
\theta_i^\star+\delta : \frac12\,\delta^\top H_i\delta \le \alpha
\right\}.
\]

Let us remind the main assumptions and results below.
\begin{assumption}[Local regularity]
\label{ass:local}
For each minimum $\theta_i^\star$,
$$
\nabla f(\theta_i^\star)=0,
\qquad
H_i := \nabla^2 f(\theta_i^\star)\succ 0.
$$
Moreover, the Hessian is locally Lipschitz: there exist $r_i>0$ and $\rho_i>0$ such that, for all $\theta,\theta'\in B(\theta_i^\star,r_i)$,
$$
\|\nabla^2 f(\theta)-\nabla^2 f(\theta')\|
\le
\rho_i\|\theta-\theta'\|.
$$
\end{assumption}

\begin{proposition}[Local ellipsoidal approximation]
\label{prop:local-ellipsoid}
Under Assumption~\ref{ass:local}, there exist constants $c_{1,i},c_{2,i}>0$ and $\varepsilon_i^{\max}>0$ such that, for all $0<\varepsilon\le \varepsilon_i^{\max}$,
$$
\left\{
\theta_i^\star+\delta :
\frac12 \delta^\top H_i\delta \le c_{1,i}\varepsilon
\right\}
\subset
\mathcal{L}_{i,\varepsilon}
\subset
\left\{
\theta_i^\star+\delta :
\frac12 \delta^\top H_i\delta \le c_{2,i}\varepsilon
\right\}.
$$
\end{proposition}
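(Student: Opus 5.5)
The plan is a second-order Taylor expansion with integral remainder, controlled by the Lipschitz Hessian. For $\delta$ with $\|\delta\|\le r_i$, since $\nabla f(\theta_i^\star)=0$,
\[
f(\theta_i^\star+\delta)-f_i^\star=\tfrac12\delta^\top H_i\delta+R(\delta),
\qquad
R(\delta)=\int_0^1(1-s)\,\delta^\top\bigl(\nabla^2 f(\theta_i^\star+s\delta)-H_i\bigr)\delta\,ds .
\]
The Lipschitz assumption then gives $|R(\delta)|\le \frac{\rho_i}{6}\|\delta\|^3$. Using $\frac12\delta^\top H_i\delta\ge \frac{\mu_i}{2}\|\delta\|^2$, I will choose a radius $r_i'\le r_i$ with $\frac{\rho_i}{6}r_i'\le \frac{\mu_i}{4}$. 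On $B(\theta_i^\star,r_i')$ this yields the two-sided comparison
\[
\tfrac12 q(\delta)\le f(\theta_i^\star+\delta)-f_i^\star\le \tfrac32 q(\delta),
\qquad q(\delta):=\tfrac12\delta^\top H_i\delta .
\]

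For the inner inclusion I take $c_{1,i}=2/3$. If $q(\delta)\le c_{1,i}\varepsilon$, then $\|\delta\|^2\le 2c_{1,i}\varepsilon/\mu_i$. Once $\varepsilon^{\max}_i$ is small enough, this forces $\|\delta\|\le r_i'$, so the comparison applies and gives $f-f_i^\star\le \frac32 q\le\varepsilon$. Since $r_i'\le r_i$, the point also lies in $B(\theta_i^\star,r_i)$, hence in $\mathcal{L}_{i,\varepsilon}$.

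The outer inclusion is the main obstacle. $\mathcal{L}_{i,\varepsilon}$ is cut out by the ball $B(\theta_i^\star,r_i)$, but the comparison only holds on the smaller ball of radius $r_i'$. So I must exclude points in the shell $r_i'<\|\delta\|\le r_i$ at which $f\le f_i^\star+\varepsilon$. I would first try to take $r_i'=r_i$, shrinking the basin radius $r_i$ at the outset so that it also satisfies $\frac{\rho_i}{6}r_i\le\frac{\mu_i}{4}$. This is legitimate because the statement only requires $r_i$ to separate the minima, and shrinking preserves that. If $r_i$ must stay fixed, I would instead use that $\theta_i^\star$ is the unique minimizer of $f$ on the compact ball, which holds by isolation, possibly after a further shrink. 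Then
\[
m:=\min_{r_i'\le\|\delta\|\le r_i}\bigl(f(\theta_i^\star+\delta)-f_i^\star\bigr)>0 ,
\]
by continuity and compactness, and taking $\varepsilon_i^{\max}<m$ rules out the shell.

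With either fix, every $\theta=\theta_i^\star+\delta\in\mathcal{L}_{i,\varepsilon}$ satisfies $\|\delta\|\le r_i'$. The comparison then gives $\frac12 q(\delta)\le\varepsilon$, so the outer inclusion holds with $c_{2,i}=2$. Finally I fix $\varepsilon_i^{\max}$ as the minimum of the two smallness thresholds, the one from the inner inclusion and the one from the outer inclusion, and the proof is complete.
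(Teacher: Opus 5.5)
Your proof matches the paper's everywhere except on the shell. It uses the same integral-remainder bound $\frac{\rho_i}{6}\|\delta\|^3$ and the same radius condition; the paper takes $\bar r_i=\min\{r_i,3\mu_i/(2\rho_i)\}$, which is exactly $\frac{\rho_i}{6}\bar r_i\le\frac{\mu_i}{4}$. It also has the same comparison $\frac12 q\le f-f_i^\star\le\frac32 q$ and the same constants $c_{1,i}=2/3$, $c_{2,i}=2$. For the shell, the paper does not shrink $r_i$. It uses what is your fallback: it sets $\varepsilon_i^{\max}:=\eta_i=\inf_{\bar r_i\le\|\delta\|\le r_i}\bigl(f(\theta_i^\star+\delta)-f_i^\star\bigr)$ and asserts $\eta_i>0$ because $\theta_i^\star$ is a strict local minimum and the annulus is compact.

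Keep your primary fix and drop the fallback. Strict local minimality does not imply $f>f_i^\star$ on the whole annulus. Neither does isolation, which together with non-overlap only rules out other \emph{interior} local minima in the ball.

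A counterexample: take $f(x)=x^2-x^3$ on $\R$. Then $f''$ is $6$-Lipschitz everywhere and $0$ is the only local minimum, so $r_i=2$ is admissible. Yet $f(1)=f(0)$ and $f<f(0)$ on $(1,2)$, so $1\in\mathcal L_{i,\varepsilon}$ for every $\varepsilon>0$ and the outer inclusion fails.

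So for some admissible fixed $r_i$ the proposition is false. Your claim that $\theta_i^\star$ is the unique minimizer on the ball ``by isolation'' is wrong, and the paper's $\eta_i$ can be negative. The statement only holds once $r_i$ is small, for instance $r_i\le 3\mu_i/(2\rho_i)$ as in your first fix. At minimum it needs $r_i<3\mu_i/\rho_i$, where $f-f_i^\star\ge\|\delta\|^2\bigl(\frac{\mu_i}{2}-\frac{\rho_i}{6}\|\delta\|\bigr)$ makes $\eta_i$ positive.

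Your route is also simpler. With the shell empty, the outer inclusion holds for every $\varepsilon>0$, and $\varepsilon_i^{\max}$ is needed only for the inner inclusion. There, your check that the inner ellipsoid stays inside the comparison ball makes explicit a step the paper leaves implicit; in the paper it holds because $\eta_i\le\frac34\mu_i\bar r_i^2$.
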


\begin{theorem}[Effective curvature under frozen constraints]
\label{thm:schur}
Under Assumption~\ref{ass:local}, for $b$ in a neighborhood of $0$,
$$
\varphi_{i,t}(b)
=
\frac12\, b^\top \Sigma_{i,t} b + O(\|b\|^3),
$$
where
$$
\Sigma_{i,t}
=
H^{(t)}_{i,bb}
-
H^{(t)}_{i,ba}(H^{(t)}_{i,aa})^{-1}H^{(t)}_{i,ab}
$$
is the Schur complement of the Hessian block matrix in the coordinates $(a,b)$.
\end{theorem}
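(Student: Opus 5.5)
The plan is a standard implicit-function-theorem reduction followed by a second-order Taylor expansion. In the coordinates $(a,b)$ the Hessian of $g_{i,t}$ at $(0,0)$ is $[U_t\;V_t]^\top H_i[U_t\;V_t]$. Since $Q$ is orthogonal and $H_i\succ0$, this matrix is positive definite. In particular the block $H^{(t)}_{i,aa}=U_t^\top H_iU_t$ satisfies $H^{(t)}_{i,aa}\succeq\mu_i I$ and is invertible, so $\Sigma_{i,t}$ is well defined. Moreover $\Sigma_{i,t}\succeq\mu_i I$, because the Schur complement of a positive definite matrix dominates the inverse of the corresponding block of its inverse, or more simply by the variational characterization below. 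By Assumption~\ref{ass:local}, $f$ is $C^2$ near $\theta_i^\star$ with Lipschitz Hessian. Hence $g_{i,t}$ is $C^2$ on the ball of radius $r_i$ around $(0,0)$, and we have the Taylor bound
\[
\Bigl|g_{i,t}(a,b)-f_i^\star-\tfrac12 (a,b)^\top \mathcal{H}(a,b)\Bigr|\le \tfrac{\rho_i}{6}\|(a,b)\|^3 ,
\]
where $\mathcal{H}$ denotes the block Hessian.

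\textbf{Step 1 (the quadratic model).} For the quadratic part alone, minimizing over $a$ gives $a^\ast_0(b)=-(H^{(t)}_{i,aa})^{-1}H^{(t)}_{i,ab}b$ with value exactly $\tfrac12 b^\top\Sigma_{i,t}b$. This is the usual completion of squares. It also shows $\tfrac12(a,b)^\top\mathcal H(a,b)\ge \tfrac12 b^\top\Sigma_{i,t}b+\tfrac{\mu_i}{2}\|a-a_0^\ast(b)\|^2$.

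\textbf{Step 2 (localizing the minimizer).} This is the main obstacle. The minimum in the definition of $\varphi_{i,t}$ is taken over all $a$, but the Lipschitz-Hessian control holds only in $B(\theta_i^\star,r_i)$. I would therefore interpret $\varphi_{i,t}$ locally, with $a$ ranging over a ball of radius $r\le r_i/2$, consistent with the local basin $\mathcal L_{i,\varepsilon}$ being intersected with $B(\theta_i^\star,r_i)$. I would then show the local minimizer is interior and close to $a_0^\ast(b)$.

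\emph{Upper bound.} Evaluating at $a=a_0^\ast(b)$ gives $\varphi_{i,t}(b)\le \tfrac12 b^\top\Sigma_{i,t}b+C\|b\|^3$, since $\|a_0^\ast(b)\|\le (L_i/\mu_i)\|b\|$.

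\emph{Localization.} Any $a$ with $\|a\|\le r$ satisfies
\[
g-f_i^\star\ge \tfrac12 b^\top\Sigma b+\tfrac{\mu_i}{2}\|a-a_0^\ast\|^2-\tfrac{\rho_i}{6}\|(a,b)\|^3 .
\]
If $r$ is chosen so that $\rho_i r/3\le \mu_i/4$, the cubic term is absorbed, up to $O(\|b\|^3)$ terms, into the quadratic penalty. Comparing with the upper bound then forces near-optimal $a$ to satisfy $\|a-a_0^\ast(b)\|=O(\|b\|^{3/2})$, and hence $\|a\|=O(\|b\|)$.

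\emph{Alternative.} One could instead apply the implicit function theorem to $\nabla_a g_{i,t}(a,b)=0$, which is solvable near $(0,0)$ because $\nabla_a^2 g=H^{(t)}_{i,aa}\succ0$. This gives a $C^1$ minimizer $a^\ast(b)$ with $a^\ast(b)=a_0^\ast(b)+O(\|b\|^2)$. Strong convexity of $g_{i,t}(\cdot,b)$ on the small ball, since its Hessian is $\succeq \mu_i/2$ there, shows it is the unique local minimizer.

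\textbf{Step 3 (lower bound and conclusion).} With $\|a^\ast(b)\|=O(\|b\|)$ established, $\|(a^\ast,b)\|=O(\|b\|)$. The Taylor bound then gives
\[
\varphi_{i,t}(b)\ge \tfrac12 b^\top\Sigma_{i,t}b-C'\|b\|^3 .
\]
Combined with the upper bound from Step 2, this yields $\varphi_{i,t}(b)=\tfrac12 b^\top\Sigma_{i,t}b+O(\|b\|^3)$.

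The constants depend only on $\mu_i,L_i,\rho_i$, and not on $(U_t,V_t)$, because orthogonal changes of basis preserve the eigenvalue bounds and the Lipschitz constant. The estimate is therefore uniform over the random subspace, which is what Theorem~\ref{thm:accessibility} will need.
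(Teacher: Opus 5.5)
Your proof is correct. The route you list as the ``Alternative'' in Step~2 is exactly the paper's proof. The paper applies the implicit function theorem to $\nabla_a g=0$, obtains a $C^1$ map with $a(b)=-(H^{(t)}_{i,aa})^{-1}H^{(t)}_{i,ab}b+O(\|b\|^2)$, and substitutes it into the cubic-remainder Taylor expansion of $g$.

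Your main line is different: a variational sandwich. The upper bound comes from evaluating at the quadratic minimizer $a_0^\ast(b)$. The lower bound comes from the completion-of-squares inequality $\tfrac12(a,b)^\top\mathcal H(a,b)\ge\tfrac12 b^\top\Sigma_{i,t}b$, valid for every $a$, combined with the cubic remainder once near-minimizers are localized at scale $O(\|b\|)$.

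This buys two things. First, it controls the quantity that is actually defined as a minimum. The paper simply sets $\varphi_{i,t}(b):=g(a(b),b)-g(0,0)$ for the IFT critical point and never checks that this critical point realizes $\min_a$. With $a$ unrestricted, the statement does not follow from the local assumptions: if the slice through $\theta_i^\star$ meets a deeper basin far away, then $\varphi_{i,t}(0)<0$. Your restriction of $a$ to a ball of radius $r\le r_i/2$, mirroring the $B(\theta_i^\star,r_i)$ in $\mathcal L_{i,\varepsilon}$, is the reading under which the theorem holds. Your strong-convexity remark then identifies the IFT solution with that localized minimizer.

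Second, your constants depend only on $\mu_i,L_i,\rho_i$. So both the neighborhood in $b$ and the implied constant in $O(\|b\|^3)$ are uniform over the Haar-random $(U_t,V_t)$, whereas the paper's IFT neighborhood is left unquantified.

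In exchange, the paper's route is shorter and gives more structure. It produces a $C^1$ minimizer map with $a(b)-a_0^\ast(b)=O(\|b\|^2)$, sharper than your $O(\|b\|^{3/2})$ bound for near-minimizers, though the weaker bound is all your argument needs.
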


\begin{theorem}[Local accessibility law]
\label{thm:accessibility}
Under Assumption~\ref{ass:local}, assume that the frozen offset
$
b_{i,t}^0 = V_t^\top(\theta^0-\theta_i^\star)
$
has a continuous density $p_{i,t}$ in a neighborhood of $0$, conditional on $(U_t,V_t)$. Then, as $\varepsilon\to 0$,
$$
\mathbb{P}\!\left(\mathcal{A}_t\cap\mathcal{L}_{i,\varepsilon}\neq\emptyset \,\middle|\, U_t,V_t\right)
=
p_{i,t}(0)\,
\kappa_{p_t}(2\varepsilon)^{p_t/2}
\det(\Sigma_{i,t})^{-1/2}
\bigl(1+o(1)\bigr),
$$
where $\kappa_{p_t}$ denotes the volume of the unit ball in $\R^{p_t}$.
\end{theorem}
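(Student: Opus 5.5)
The plan is to reduce the event $\{\mathcal{A}_t\cap\mathcal{L}_{i,\varepsilon}\neq\emptyset\}$ to $\{b^0_{i,t}\in\mathcal{B}_{i,t}(\varepsilon)\}$. Then I would show that $\mathcal{B}_{i,t}(\varepsilon)$ is squeezed between two ellipsoids $\{\tfrac12 b^\top\Sigma_{i,t} b\le \varepsilon(1\mp\eta(\varepsilon))\}$ with $\eta(\varepsilon)\to0$. Finally I would integrate the continuous density $p_{i,t}$ over these shrinking sets.

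\textbf{Step 1 (reduction).} Conditional on $(U_t,V_t)$, the slice $\mathcal{A}_t$ consists exactly of the points $\theta_i^\star+U_t a+V_t b^0_{i,t}$ with $a\in\R^{K_t}$. Accessibility therefore means that some $a$ satisfies $g_{i,t}(a,b^0_{i,t})-f_i^\star\le\varepsilon$ with the point inside $B(\theta_i^\star,r_i)$. I would use the localized version of $\varphi_{i,t}$, with the minimum over $a$ taken inside the ball, and note that the ball constraint is inactive for small $\varepsilon$. Indeed, by Proposition~\ref{prop:local-ellipsoid} every point of $\mathcal{L}_{i,\varepsilon}$ has $\|\delta\|\le\sqrt{2c_{2,i}\varepsilon/\mu_i}$, which tends to $0$. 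Consequently $\|b^0_{i,t}\|=O(\sqrt\varepsilon)$ on the event. Moreover, the minimizing $a^\star(b)=-(H_{aa})^{-1}H_{ab}b+O(\|b\|^2)$, obtained from the implicit function theorem using $H_{aa}\succ0$, lies well inside the ball. The event thus equals $\{\varphi_{i,t}(b^0_{i,t})\le\varepsilon\}$ as stated in the text.

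\textbf{Step 2 (sandwich).} Theorem~\ref{thm:schur} gives $|\varphi_{i,t}(b)-\tfrac12 b^\top\Sigma_{i,t}b|\le C\|b\|^3$ near $0$. Here $\Sigma_{i,t}\succ0$, because a Schur complement of a positive definite matrix is positive definite, and in fact $\Sigma_{i,t}\succeq\mu_i I$. Hence $C\|b\|^3\le (C/\mu_i)\|b\|\cdot b^\top\Sigma_{i,t} b$. On $\|b\|\le C'\sqrt\varepsilon$ this yields
\[
(1-\eta)\tfrac12 b^\top\Sigma_{i,t}b\le\varphi_{i,t}(b)\le(1+\eta)\tfrac12 b^\top\Sigma_{i,t}b,
\qquad \eta=O(\sqrt\varepsilon).
\]
It follows that $E^-_\varepsilon\subset\mathcal{B}_{i,t}(\varepsilon)\subset E^+_\varepsilon$, where $E^\pm_\varepsilon=\{\tfrac12 b^\top\Sigma_{i,t}b\le\varepsilon/(1\mp\eta)\}$. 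A priori localization to $\|b\|\le C'\sqrt\varepsilon$ comes from Step 1 together with a lower bound $\varphi_{i,t}(b)\ge c\|b\|^2$ on a fixed neighborhood.

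\textbf{Step 3 (integration).} The probability is $\int_{\mathcal{B}_{i,t}(\varepsilon)}p_{i,t}(b)\,db$. Both ellipsoids shrink to $0$, so continuity gives $p_{i,t}=p_{i,t}(0)+o(1)$ on them. Their volumes are $\kappa_{p_t}(2\varepsilon/(1\mp\eta))^{p_t/2}\det\Sigma_{i,t}^{-1/2}$, which I would compute by the change of variables $b=\Sigma^{-1/2}(2\varepsilon)^{1/2}u$. Since $(1\mp\eta)^{-p_t/2}\to1$, the two bounds coincide to first order and the stated law follows. This requires $p_{i,t}(0)>0$; otherwise one reads the statement as $o(\varepsilon^{p_t/2})$.

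The main obstacle is Step 1/2's uniformity. Theorem~\ref{thm:schur} is local, while the event involves a global minimum over $a$ restricted to the ball. I must make sure that no far-away $a$ within $B(\theta_i^\star,r_i)$ attains a smaller loss than the local branch $a^\star(b)$. I would handle this by invoking the lower inclusion of Proposition~\ref{prop:local-ellipsoid}: every point of $\mathcal{L}_{i,\varepsilon}$ is $O(\sqrt\varepsilon)$-close to $\theta_i^\star$, so only the local branch matters. The $O(\|b\|^3)$ constant depends on $\rho_i$ and $\|H_{aa}^{-1}\|\le1/\mu_i$.
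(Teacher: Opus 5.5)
Your proposal is correct and follows essentially the same route as the paper's proof: reduce the event to $b^0_{i,t}\in\mathcal{B}_{i,t}(\varepsilon)$, sandwich $\mathcal{B}_{i,t}(\varepsilon)$ between $\Sigma_{i,t}$-ellipsoids with $O(\sqrt{\varepsilon})$ relative error (the paper's ellipsoidal comparison lemma), compute the ellipsoid volumes, and use continuity of $p_{i,t}$ at $0$. Two of your additions are more careful than the paper, which defines $\varphi_{i,t}$ with a global $\min_a$ and passes over both points: your Step~1 localization via the ball constraint in $\mathcal{L}_{i,\varepsilon}$ and the implicit-function branch (it is the outer inclusion $\mathcal{L}_{i,\varepsilon}\subset E_{i,c_{2,i}\varepsilon}$ that does this work), and your caveat that the relative form of the law needs $p_{i,t}(0)>0$.
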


\subsection{A local Taylor estimate}

We begin with the standard second-order Taylor expansion with cubic remainder.

\begin{lemma}[Taylor expansion with cubic remainder]
\label{lem:taylor}
Under Assumption~\ref{ass:local}, for every $\delta$ such that $\|\delta\|\le r_i$,
\[
f(\theta_i^\star+\delta)
=
f_i^\star + \frac12\,\delta^\top H_i\delta + R_i(\delta),
\]
where
\[
|R_i(\delta)| \le \frac{\rho_i}{6}\|\delta\|^3 .
\]
\end{lemma}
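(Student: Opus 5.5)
The plan is to apply Taylor's theorem with integral remainder along the segment from $\theta_i^\star$ to $\theta_i^\star+\delta$, and then use the Hessian Lipschitz bound from Assumption~\ref{ass:local} to control the deviation from the quadratic model.

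First, since $\|\delta\|\le r_i$, the whole segment $\{\theta_i^\star+s\delta : s\in[0,1]\}$ lies in the ball $B(\theta_i^\star,r_i)$. On this ball $f$ is $C^2$ with a Lipschitz Hessian. Set $h(s):=f(\theta_i^\star+s\delta)$, so that $h'(s)=\nabla f(\theta_i^\star+s\delta)^\top\delta$ and $h''(s)=\delta^\top\nabla^2 f(\theta_i^\star+s\delta)\delta$. Taylor's formula with integral remainder gives
\[
h(1)=h(0)+h'(0)+\int_0^1(1-s)\,h''(s)\,ds .
\]
Here $h(0)=f_i^\star$, and $h'(0)=0$ because $\nabla f(\theta_i^\star)=0$.

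Next, split $h''(s)=\delta^\top H_i\delta+\delta^\top\bigl(\nabla^2 f(\theta_i^\star+s\delta)-H_i\bigr)\delta$. Since $\int_0^1(1-s)\,ds=\tfrac12$, the first term contributes exactly $\tfrac12\delta^\top H_i\delta$. The second term defines the remainder
\[
R_i(\delta)=\int_0^1(1-s)\,\delta^\top\bigl(\nabla^2 f(\theta_i^\star+s\delta)-H_i\bigr)\delta\,ds .
\]
The Lipschitz bound gives $\|\nabla^2 f(\theta_i^\star+s\delta)-H_i\|\le\rho_i s\|\delta\|$, so
\[
|R_i(\delta)|\le\rho_i\|\delta\|^3\int_0^1 s(1-s)\,ds=\frac{\rho_i}{6}\|\delta\|^3,
\]
which is the claimed bound.

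There is no real obstacle in this argument. The only points to check are that the segment stays inside the ball where the Lipschitz hypothesis holds, which is guaranteed by $\|\delta\|\le r_i$, and that $f$ is twice continuously differentiable there so the integral form of the remainder is valid. Continuity of the Hessian follows from its Lipschitz property, so the integral remainder applies.
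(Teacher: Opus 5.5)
Your proof is correct and matches the paper's: both write the remainder as $\int_0^1(1-s)\,\delta^\top\bigl(\nabla^2 f(\theta_i^\star+s\delta)-H_i\bigr)\delta\,ds$, bound the Hessian deviation by $\rho_i s\|\delta\|$ using the Lipschitz assumption, and use $\int_0^1 s(1-s)\,ds=\tfrac16$. You are somewhat more explicit than the paper, since you reduce to the one-variable function $h(s)$ and state that the first-order term vanishes because $\nabla f(\theta_i^\star)=0$.
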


\begin{proof}
Taylor's theorem with integral remainder gives
\[
f(\theta_i^\star+\delta)-f_i^\star
=
\frac12\,\delta^\top H_i\delta
+
\int_0^1 (1-s)\,\delta^\top\!\bigl(\nabla^2 f(\theta_i^\star+s\delta)-H_i\bigr)\delta\,ds.
\]
The remainder is therefore
\[
R_i(\delta)
=
\int_0^1 (1-s)\,\delta^\top\!\bigl(\nabla^2 f(\theta_i^\star+s\delta)-H_i\bigr)\delta\,ds.
\]
By Assumption~\ref{ass:local},
\[
\|\nabla^2 f(\theta_i^\star+s\delta)-H_i\|
\le \rho_i s\|\delta\|.
\]
Hence
\[
|R_i(\delta)|
\le
\int_0^1 (1-s)\,\|\delta\|^2 \cdot \rho_i s\|\delta\|\,ds
=
\frac{\rho_i}{6}\|\delta\|^3.
\]
\end{proof}

\subsection{Proof of Proposition~\ref{prop:local-ellipsoid}}

We first show that, for sufficiently small radii, the loss is comparable to its quadratic model.

\begin{proof}[Proof of Proposition~\ref{prop:local-ellipsoid}]
Choose
\[
\bar r_i := \min\!\left\{r_i,\frac{3\mu_i}{2\rho_i}\right\},
\]
with the convention that $3\mu_i/(2\rho_i)=+\infty$ when $\rho_i=0$. For every $\delta$ with $\|\delta\|\le \bar r_i$, Lemma~\ref{lem:taylor} gives
\[
f(\theta_i^\star+\delta)-f_i^\star
=
\frac12\,\delta^\top H_i\delta + R_i(\delta),
\qquad
|R_i(\delta)| \le \frac{\rho_i}{6}\|\delta\|^3.
\]
Since
\[
\delta^\top H_i\delta \ge \mu_i\|\delta\|^2,
\]
the choice of $\bar r_i$ implies
\[
\frac{\rho_i}{6}\|\delta\|^3
\le
\frac{\rho_i\bar r_i}{6}\|\delta\|^2
\le
\frac{\mu_i}{4}\|\delta\|^2
\le
\frac14\,\delta^\top H_i\delta.
\]
Therefore, for all $\|\delta\|\le \bar r_i$,
\begin{equation}
\label{eq:quadratic-comparison}
\frac14\,\delta^\top H_i\delta
\le
f(\theta_i^\star+\delta)-f_i^\star
\le
\frac34\,\delta^\top H_i\delta.
\end{equation}

Next, because $\theta_i^\star$ is a strict local minimum and the annulus
\[
\mathcal A_i := \{\theta_i^\star+\delta : \bar r_i \le \|\delta\| \le r_i\}
\]
is compact and does not contain $\theta_i^\star$, the continuous function
\[
\theta \mapsto f(\theta)-f_i^\star
\]
attains a strictly positive minimum on $\mathcal A_i$. Let
\[
\eta_i := \inf_{\bar r_i \le \|\delta\| \le r_i}\bigl(f(\theta_i^\star+\delta)-f_i^\star\bigr) > 0.
\]
Set
\[
\varepsilon_i^{\max} := \eta_i.
\]

Fix now $0<\varepsilon\le \varepsilon_i^{\max}$. If $\theta_i^\star+\delta \in \mathcal L_{i,\varepsilon}$, then by definition
\[
f(\theta_i^\star+\delta)-f_i^\star \le \varepsilon \le \eta_i.
\]
Hence necessarily $\|\delta\|<\bar r_i$, for otherwise the point would lie in the annulus where the loss gap is at least $\eta_i$. We may therefore apply \eqref{eq:quadratic-comparison}.

For the inner inclusion, suppose that
\[
\frac12\,\delta^\top H_i\delta \le \frac{2}{3}\varepsilon.
\]
Then
\[
f(\theta_i^\star+\delta)-f_i^\star
\le
\frac34\,\delta^\top H_i\delta
\le
\varepsilon,
\]
so $\theta_i^\star+\delta\in \mathcal L_{i,\varepsilon}$.

For the outer inclusion, suppose that $\theta_i^\star+\delta\in \mathcal L_{i,\varepsilon}$. Then
\[
f(\theta_i^\star+\delta)-f_i^\star \le \varepsilon,
\]
and \eqref{eq:quadratic-comparison} yields
\[
\frac14\,\delta^\top H_i\delta \le \varepsilon,
\]
that is,
\[
\frac12\,\delta^\top H_i\delta \le 2\varepsilon.
\]
Thus
\[
E_{i,\frac{2}{3}\varepsilon} \subset \mathcal L_{i,\varepsilon} \subset E_{i,2\varepsilon}.
\]
This proves the claim with $c_{1,i}=2/3$ and $c_{2,i}=2$.
\end{proof}

\subsection{Preliminaries for the reduced problem}

Let
\[
g(a,b) := f(\theta_i^\star + U_t a + V_t b),
\]
and write the Hessian of $g$ at $(a,b)=(0,0)$ in block form:
\[
\nabla^2 g(0,0)
=
\begin{pmatrix}
H_{aa}^{(t)} & H_{ab}^{(t)} \\
H_{ba}^{(t)} & H_{bb}^{(t)}
\end{pmatrix}.
\]
Because $[U_t\;V_t]$ is orthogonal,
\[
\nabla^2 g(0,0) = [U_t\;V_t]^\top H_i [U_t\;V_t],
\]
so it is symmetric positive definite whenever $H_i$ is. In particular, the principal block $H_{aa}^{(t)}$ is positive definite, and the Schur complement
\[
\Sigma_{i,t}
=
H_{bb}^{(t)} - H_{ba}^{(t)}(H_{aa}^{(t)})^{-1}H_{ab}^{(t)}
\]
is also positive definite.

The next lemma records the local Taylor expansion of $g$.

\begin{lemma}[Taylor expansion of the reduced coordinates]
\label{lem:g-expansion}
There exists a constant $C_{i,t}>0$ such that, for $(a,b)$ sufficiently close to $(0,0)$,
\[
g(a,b)-g(0,0)
=
\frac12 a^\top H_{aa}^{(t)} a
+
a^\top H_{ab}^{(t)} b
+
\frac12 b^\top H_{bb}^{(t)} b
+
R(a,b),
\]
with
\[
|R(a,b)| \le C_{i,t}\|(a,b)\|^3,
\]
and
\[
\nabla_a g(a,b)
=
H_{aa}^{(t)} a + H_{ab}^{(t)} b + r(a,b),
\qquad
\|r(a,b)\| \le C_{i,t}\|(a,b)\|^2.
\]
\end{lemma}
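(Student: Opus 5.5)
We transport Lemma~\ref{lem:taylor} through the orthogonal change of variables $\delta = U_t a + V_t b = Q_t (a,b)$, where $Q_t=[U_t\;V_t]$. Because $Q_t$ is orthogonal, $\|\delta\| = \|(a,b)\|$, so the ball $\|\delta\|\le r_i$ corresponds exactly to the ball $\|(a,b)\|\le r_i$. Moreover,
\[
\tfrac12\delta^\top H_i\delta = \tfrac12 (a,b)^\top Q_t^\top H_i Q_t (a,b),
\]
and expanding the block form of $Q_t^\top H_i Q_t = \nabla^2 g(0,0)$ gives the three quadratic terms of the statement, with the cross terms combining into $a^\top H_{ab}^{(t)} b$ by symmetry ($H_{ba}^{(t)} = (H_{ab}^{(t)})^\top$). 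Setting $R(a,b) := R_i(U_t a+V_t b)$, Lemma~\ref{lem:taylor} yields $|R(a,b)|\le \frac{\rho_i}{6}\|(a,b)\|^3$ whenever $\|(a,b)\|\le r_i$. This settles the first claim, with any constant $C_{i,t}\ge \rho_i/6$.

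For the gradient we argue directly instead of differentiating the remainder bound, which would not be legitimate. By the chain rule, $\nabla_a g(a,b) = U_t^\top \nabla f(\theta_i^\star+\delta)$. Since $\nabla f(\theta_i^\star)=0$, the fundamental theorem of calculus applied to $\nabla f$ gives
\[
\nabla f(\theta_i^\star+\delta) = H_i\delta + \int_0^1 \bigl(\nabla^2 f(\theta_i^\star+s\delta)-H_i\bigr)\delta\,ds .
\]
By the Lipschitz condition in Assumption~\ref{ass:local}, the integral has norm at most $\int_0^1 \rho_i s\|\delta\|^2\,ds = \frac{\rho_i}{2}\|\delta\|^2$. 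Applying $U_t^\top$, we have $U_t^\top H_i\delta = U_t^\top H_i U_t a + U_t^\top H_i V_t b = H_{aa}^{(t)}a + H_{ab}^{(t)} b$. The remainder is $r(a,b) = U_t^\top \int_0^1(\cdots)\,ds$, and since $\|U_t^\top\|\le 1$ it satisfies $\|r(a,b)\|\le \frac{\rho_i}{2}\|(a,b)\|^2$.

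Taking $C_{i,t} := \rho_i/2$, which dominates $\rho_i/6$, and restricting to $\|(a,b)\|\le r_i$ gives both bounds simultaneously. The constant does not even depend on $t$. The only step needing care is the gradient remainder: it requires the integral form of the first-order expansion of $\nabla f$, together with the fact that the segment from $\theta_i^\star$ to $\theta_i^\star+\delta$ stays inside $B(\theta_i^\star,r_i)$. This holds by convexity of the ball, so the Lipschitz bound applies along the whole path. Everything else is bookkeeping with the orthogonality of $[U_t\;V_t]$.
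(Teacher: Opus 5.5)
Your proof is correct and follows the same route as the paper, which transports Lemma~\ref{lem:taylor} through the norm-preserving map $(a,b)\mapsto\theta_i^\star+U_ta+V_tb$ and notes that the Hessian-Lipschitz bound carries over to $g$. Your explicit handling of the gradient remainder via the integral form of $\nabla f$, which gives $C_{i,t}=\rho_i/2$ uniformly in $t$, fills in a step that the paper's one-line proof leaves implicit, since Lemma~\ref{lem:taylor} by itself only controls function values.
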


\begin{proof}
This is a direct consequence of Lemma~\ref{lem:taylor} applied to the composition
\[
(a,b)\mapsto \theta_i^\star + U_t a + V_t b.
\]
Since $[U_t\;V_t]$ is orthogonal, this map preserves Euclidean norms, and the local Lipschitz bound on the Hessian of $f$ transfers to $g$.
\end{proof}

\subsection{Proof of Theorem~\ref{thm:schur}}

\begin{proof}[Proof of Theorem~\ref{thm:schur}]
For notational simplicity, write
\[
H_{aa}:=H_{aa}^{(t)},\qquad
H_{ab}:=H_{ab}^{(t)},\qquad
H_{ba}:=H_{ba}^{(t)},\qquad
H_{bb}:=H_{bb}^{(t)}.
\]
Because $H_{aa}\succ 0$, the equation
\[
\nabla_a g(a,b)=0
\]
can be solved locally for $a$ as a function of $b$. By the implicit function theorem, there exists a neighborhood of $0$ and a unique $C^1$ map $a(b)$ such that
\[
\nabla_a g(a(b),b)=0,
\qquad
a(0)=0.
\]
Define
\[
\varphi_{i,t}(b)=g(a(b),b)-g(0,0).
\]

We first estimate $a(b)$. By Lemma~\ref{lem:g-expansion},
\[
0
=
\nabla_a g(a(b),b)
=
H_{aa}a(b)+H_{ab}b+r(a(b),b),
\]
with
\[
\|r(a(b),b)\|\le C\|(a(b),b)\|^2.
\]
Rearranging,
\[
a(b)
=
-H_{aa}^{-1}H_{ab}b - H_{aa}^{-1}r(a(b),b).
\]
Since $a(0)=0$ and $a$ is continuous, shrinking the neighborhood if necessary gives
\[
\|a(b)\| \le C_1\|b\|
\]
for $\|b\|$ small enough. Substituting this back into the previous display yields
\[
a(b) = -H_{aa}^{-1}H_{ab}b + O(\|b\|^2).
\]

Now expand $g(a(b),b)-g(0,0)$ using Lemma~\ref{lem:g-expansion}:
\[
\varphi_{i,t}(b)
=
\frac12 a(b)^\top H_{aa} a(b)
+
a(b)^\top H_{ab} b
+
\frac12 b^\top H_{bb} b
+
R(a(b),b).
\]
Because $a(b)=O(\|b\|)$, the remainder satisfies
\[
R(a(b),b)=O(\|b\|^3).
\]
Substituting
\[
a(b) = -H_{aa}^{-1}H_{ab}b + O(\|b\|^2)
\]
into the quadratic part gives
\[
\frac12 a(b)^\top H_{aa} a(b)
+
a(b)^\top H_{ab} b
+
\frac12 b^\top H_{bb} b
=
\frac12\, b^\top\!\bigl(H_{bb}-H_{ba}H_{aa}^{-1}H_{ab}\bigr)b + O(\|b\|^3).
\]
Therefore
\[
\varphi_{i,t}(b)
=
\frac12\, b^\top \Sigma_{i,t} b + O(\|b\|^3),
\]
with
\[
\Sigma_{i,t}=H_{bb}-H_{ba}H_{aa}^{-1}H_{ab}.
\]
This is exactly the claimed expansion.
\end{proof}

\subsection{A local comparison for the compatibility set}

We now compare the compatibility set
\[
\mathcal{B}_{i,t}(\varepsilon)
=
\{b : \varphi_{i,t}(b)\le \varepsilon\}
\]
to the ellipsoid associated with $\Sigma_{i,t}$.

\begin{lemma}[Ellipsoidal comparison for $\mathcal{B}_{i,t}(\varepsilon)$]
\label{lem:compatibility-ellipsoid}
Let
\[
\mathcal E_{i,t}(\alpha)
:=
\left\{
b : \frac12\,b^\top \Sigma_{i,t} b \le \alpha
\right\}.
\]
Then there exists $\varepsilon_0>0$ such that, for all sufficiently small $\varepsilon$,
\[
\mathcal E_{i,t}\bigl((1-C\sqrt{\varepsilon})\varepsilon\bigr)
\subset
\mathcal B_{i,t}(\varepsilon)
\subset
\mathcal E_{i,t}\bigl((1+C\sqrt{\varepsilon})\varepsilon\bigr)
\]
for some constant $C>0$ independent of $\varepsilon$.
\end{lemma}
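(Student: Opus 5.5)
The plan is to turn the cubic-remainder expansion of Theorem~\ref{thm:schur} into a two-sided multiplicative comparison between $\varphi_{i,t}(b)$ and $q(b):=\frac12 b^\top\Sigma_{i,t}b$, and then read off the inclusions from the scaling of ellipsoids. Since $\Sigma_{i,t}\succ 0$ (shown in the preliminaries), write $\sigma:=\lambda_{\min}(\Sigma_{i,t})>0$, so $q(b)\ge \frac{\sigma}{2}\|b\|^2$. Theorem~\ref{thm:schur} gives $|\varphi_{i,t}(b)-q(b)|\le K\|b\|^3$ for $\|b\|\le r_0$. Combining the two bounds yields $|\varphi_{i,t}(b)-q(b)|\le K(2/\sigma)^{3/2}q(b)^{3/2}=:K'q(b)^{3/2}$. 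Hence, on the ball $\|b\|\le r_0$,
\[
q(b)\bigl(1-K'\sqrt{q(b)}\bigr)\le \varphi_{i,t}(b)\le q(b)\bigl(1+K'\sqrt{q(b)}\bigr).
\]

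\emph{Localization.} The main obstacle is that the expansion is only local, while $\mathcal B_{i,t}(\varepsilon)$ is defined globally through $\varphi_{i,t}$. I would take $\varphi_{i,t}$ as the locally defined reduced function from the proof of Theorem~\ref{thm:schur}, i.e.\ $g(a(b),b)-g(0,0)$ on a neighborhood, and restrict attention to $\|b\|\le r_0$. This parallels the role of $\mathcal L_{i,\varepsilon}$ being intersected with $B(\theta_i^\star,r_i)$. Under this convention I must show that small $\varepsilon$ forces $b$ into a small ball. The lower bound above gives $\varphi_{i,t}(b)\ge \frac{\sigma}{4}\|b\|^2$ once $r_0$ is shrunk so that $K r_0\le \sigma/4$. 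So $\varphi_{i,t}(b)\le\varepsilon$ implies $\|b\|\le 2\sqrt{\varepsilon/\sigma}$, and hence $q(b)\le L\varepsilon$ with $L$ depending only on $\Sigma_{i,t}$. If one insists on a global definition, one additionally needs an annulus argument as in Proposition~\ref{prop:local-ellipsoid}: $\varphi_{i,t}$ has a positive infimum on $r_0'\le\|b\|\le r_0$. I would carry out the argument within this local framing, choosing $\varepsilon_0$ below that infimum.

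\emph{Outer inclusion.} For $b\in\mathcal B_{i,t}(\varepsilon)$, the localization gives $q(b)\le L\varepsilon$. The lower comparison then gives $q(b)\le \varepsilon+K'q(b)^{3/2}\le \varepsilon+K'L^{3/2}\varepsilon^{3/2}$. So $b\in\mathcal E_{i,t}\bigl((1+C\sqrt\varepsilon)\varepsilon\bigr)$ with $C\ge K'L^{3/2}$.

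\emph{Inner inclusion.} Take $q(b)\le(1-C\sqrt\varepsilon)\varepsilon\le\varepsilon$. For $\varepsilon$ small this automatically places $b$ in the ball of radius $r_0$, since $\|b\|\le\sqrt{2\varepsilon/\sigma}$. The upper comparison gives
\[
\varphi_{i,t}(b)\le q(b)+K'\varepsilon^{3/2}\le \varepsilon-C\varepsilon^{3/2}+K'\varepsilon^{3/2}\le\varepsilon
\]
provided $C\ge K'$. Taking $C=\max(K',K'L^{3/2})$ and $\varepsilon_0$ small enough that $C\sqrt{\varepsilon_0}<1$ and all the radius conditions hold completes the argument.
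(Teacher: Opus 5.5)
Your proposal is correct and follows essentially the same route as the paper: coercivity $\varphi_{i,t}(b)\ge\frac14\lambda_{\min}(\Sigma_{i,t})\|b\|^2$ near $0$ confines $\mathcal B_{i,t}(\varepsilon)$ to a ball of radius $O(\sqrt{\varepsilon})$. On that ball the cubic remainder from Theorem~\ref{thm:schur} is $O(\varepsilon^{3/2})$, which gives both inclusions. The remaining differences are minor. Your multiplicative form $|\varphi_{i,t}-q|\le K'q^{3/2}$ is only cosmetic. Your explicit local convention for $\varphi_{i,t}$ is more careful than the paper's sphere-minimum step, which on its own does not rule out far-away $b$ with $\varphi_{i,t}(b)\le\varepsilon$.
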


\begin{proof}
By Theorem~\ref{thm:schur},
\[
\varphi_{i,t}(b)=\frac12\,b^\top\Sigma_{i,t}b + r(b),
\qquad
|r(b)|\le C_0\|b\|^3
\]
for $\|b\|$ sufficiently small. Let
\[
\underline\lambda_{i,t}:=\lambda_{\min}(\Sigma_{i,t})>0.
\]
Choose $\delta>0$ small enough that
\[
|r(b)|\le \frac14\,\underline\lambda_{i,t}\|b\|^2
\qquad\text{whenever }\|b\|\le \delta.
\]
Then, for $\|b\|\le\delta$,
\[
\varphi_{i,t}(b)
\ge
\frac12\,\underline\lambda_{i,t}\|b\|^2 - \frac14\,\underline\lambda_{i,t}\|b\|^2
=
\frac14\,\underline\lambda_{i,t}\|b\|^2.
\]
Since $\varphi_{i,t}(0)=0$ and $0$ is a strict local minimum, the compact sphere $\{b:\|b\|=\delta\}$ has strictly positive minimum value under $\varphi_{i,t}$. Hence, for $\varepsilon$ sufficiently small, every $b\in\mathcal B_{i,t}(\varepsilon)$ satisfies $\|b\|<\delta$, and therefore
\[
\|b\|
\le
2\sqrt{\varepsilon/\underline\lambda_{i,t}}.
\]
Consequently,
\[
|r(b)| \le C_1 \varepsilon^{3/2}
\qquad\text{for all }b\in\mathcal B_{i,t}(\varepsilon),
\]
for some constant $C_1>0$.

Now define $\eta_\varepsilon:=C_1\sqrt{\varepsilon}$. If
\[
b\in \mathcal E_{i,t}\bigl((1-\eta_\varepsilon)\varepsilon\bigr),
\]
then
\[
\frac12\,b^\top\Sigma_{i,t}b \le (1-\eta_\varepsilon)\varepsilon.
\]
Moreover $\|b\|=O(\sqrt{\varepsilon})$, so $|r(b)|\le \eta_\varepsilon\varepsilon$ for $\varepsilon$ small enough. Hence
\[
\varphi_{i,t}(b)
\le
(1-\eta_\varepsilon)\varepsilon + \eta_\varepsilon\varepsilon
=
\varepsilon,
\]
which proves the left inclusion.

Conversely, if $b\in\mathcal B_{i,t}(\varepsilon)$, then
\[
\frac12\,b^\top\Sigma_{i,t}b
\le
\varphi_{i,t}(b) + |r(b)|
\le
\varepsilon + \eta_\varepsilon\varepsilon
=
(1+\eta_\varepsilon)\varepsilon,
\]
which proves the right inclusion.
\end{proof}

\subsection{Proof of Theorem~\ref{thm:accessibility}}

\begin{proof}[Proof of Theorem~\ref{thm:accessibility}]
Fix the stage $t$ and condition on the chosen subspace $(U_t,V_t)$. In this conditional setting, the random variable $b_{i,t}^0$ has density $p_{i,t}$ in a neighborhood of $0$, and
\[
\mathbb P\!\left(\mathcal A_t\cap \mathcal L_{i,\varepsilon}\neq\emptyset \,\middle|\, U_t,V_t\right)
=
\mathbb P\!\left(b_{i,t}^0\in \mathcal B_{i,t}(\varepsilon)\,\middle|\, U_t,V_t\right)
=
\int_{\mathcal B_{i,t}(\varepsilon)} p_{i,t}(b)\,db.
\]

We first compute the volume of $\mathcal B_{i,t}(\varepsilon)$. By Lemma~\ref{lem:compatibility-ellipsoid},
\[
\mathcal E_{i,t}\bigl((1-C\sqrt{\varepsilon})\varepsilon\bigr)
\subset
\mathcal B_{i,t}(\varepsilon)
\subset
\mathcal E_{i,t}\bigl((1+C\sqrt{\varepsilon})\varepsilon\bigr).
\]
The exact volume of $\mathcal E_{i,t}(\alpha)$ is
\[
\Vol\bigl(\mathcal E_{i,t}(\alpha)\bigr)
=
\kappa_{p_t}(2\alpha)^{p_t/2}\det(\Sigma_{i,t})^{-1/2}.
\]
Therefore
\[
\Vol\bigl(\mathcal B_{i,t}(\varepsilon)\bigr)
=
\kappa_{p_t}(2\varepsilon)^{p_t/2}\det(\Sigma_{i,t})^{-1/2}\bigl(1+O(\sqrt{\varepsilon})\bigr).
\]

Next, because $\mathcal B_{i,t}(\varepsilon)$ shrinks to $\{0\}$ as $\varepsilon\to 0$ and $p_{i,t}$ is continuous at $0$,
\[
\sup_{b\in \mathcal B_{i,t}(\varepsilon)} |p_{i,t}(b)-p_{i,t}(0)| \to 0.
\]
Hence
\[
\int_{\mathcal B_{i,t}(\varepsilon)} p_{i,t}(b)\,db
=
p_{i,t}(0)\Vol\bigl(\mathcal B_{i,t}(\varepsilon)\bigr)
+
o\!\left(\Vol\bigl(\mathcal B_{i,t}(\varepsilon)\bigr)\right).
\]
Combining the two displays yields
\[
\mathbb P\!\left(\mathcal A_t\cap\mathcal L_{i,\varepsilon}\neq\emptyset \,\middle|\, U_t,V_t\right)
=
p_{i,t}(0)\,
\kappa_{p_t}(2\varepsilon)^{p_t/2}
\det(\Sigma_{i,t})^{-1/2}
\bigl(1+o(1)\bigr),
\]
which is the claimed asymptotic law.
\end{proof}

\subsection{Remark on the Gaussian case}

If, conditional on $(U_t,V_t)$, the initialization is isotropic Gaussian,
\[
\theta^0 \sim \mathcal N(0,\sigma^2 I_d),
\]
then
\[
b_{i,t}^0 = V_t^\top(\theta^0-\theta_i^\star)
\sim
\mathcal N(-V_t^\top\theta_i^\star,\sigma^2 I_{p_t}),
\]
so
\[
p_{i,t}(0)
=
(2\pi\sigma^2)^{-p_t/2}
\exp\!\left(
-\frac{\|V_t^\top\theta_i^\star\|^2}{2\sigma^2}
\right).
\]
The accessibility law from Theorem~\ref{thm:accessibility} then becomes
\[
\mathbb P\!\left(\mathcal A_t\cap\mathcal L_{i,\varepsilon}\neq\emptyset \,\middle|\, U_t,V_t\right)
=
(2\pi\sigma^2)^{-p_t/2}
\exp\!\left(
-\frac{\|V_t^\top\theta_i^\star\|^2}{2\sigma^2}
\right)
\kappa_{p_t}(2\varepsilon)^{p_t/2}
\det(\Sigma_{i,t})^{-1/2}
\bigl(1+o(1)\bigr).
\]
This makes explicit the energy--entropy tradeoff discussed in the main text: minima are favored when they are both close to the initialization in the frozen coordinates and broad along those directions.

\section{Additional Experimental Details}
\label{app:exp_details}

\subsection{Transition metrics}
\label{app:transition_metrics}

For two consecutive stage solutions $\theta^{(t)}$ and $\theta^{(t+1)}$, we consider the linear interpolation
\[
\theta_\alpha = (1-\alpha)\theta^{(t)} + \alpha \theta^{(t+1)}, 
\qquad \alpha \in [0,1].
\]
Let $\ell(\theta)$ denote the evaluation loss used for transition diagnostics. In our experiments, $\ell$ is the cross-entropy loss evaluated on the validation set, using a fixed number of batches for efficiency.

The \emph{interpolation barrier} is defined as the maximal excess loss along this path relative to the worse endpoint,
\[
\mathrm{Barrier}(\theta^{(t)},\theta^{(t+1)})
=
\max_{\alpha\in[0,1]} \ell(\theta_\alpha)
-
\max\{\ell(\theta^{(t)}),\ell(\theta^{(t+1)})\}.
\]
By construction, the barrier is nonnegative.

We also consider the signed loss difference between the two endpoints,
\[
\mathrm{EndpointGap}(\theta^{(t)},\theta^{(t+1)})
=
\ell(\theta^{(t+1)}) - \ell(\theta^{(t)}),
\]
and its absolute value,
\[
\mathrm{EndpointAbsGap}(\theta^{(t)},\theta^{(t+1)})
=
\bigl|\ell(\theta^{(t+1)}) - \ell(\theta^{(t)})\bigr|.
\]
For interpretability, we additionally report the positive and negative parts:
\[
\mathrm{EndpointImprovement}
=
\max\{0,\ell(\theta^{(t)})-\ell(\theta^{(t+1)})\},
\]
\[
\mathrm{EndpointDegradation}
=
\max\{0,\ell(\theta^{(t+1)})-\ell(\theta^{(t)})\}.
\]

Retention is a binary indicator of whether the transition remains in a low-barrier regime. Given a fixed threshold $\tau>0$, we define
\[
\mathrm{Retention}(\theta^{(t)},\theta^{(t+1)})
=
\mathbf{1}\!\left\{
\mathrm{Barrier}(\theta^{(t)},\theta^{(t+1)}) \le \tau
\right\}.
\]
In all deep-learning experiments, we use $\tau=0.1$.

Leakage is defined as the complement of retention,
\[
\mathrm{Leakage}(\theta^{(t)},\theta^{(t+1)})
=
1-\mathrm{Retention}(\theta^{(t)},\theta^{(t+1)}).
\]
Thus, leakage is also binary: it indicates whether a transition leaves the retained low-barrier regime.

For each transition, we also record the full interpolation profile, the maximum loss attained along the path, and the value of $\alpha$ at which this maximum occurs.

\subsection{Restricted curvature metrics}
\label{app:curvature_metrics}

To characterize the local geometry of stage transitions, we estimate Hessian quantities on restricted parameter subspaces. Given a binary mask $m\in\{0,1\}^d$, we define masked Hessian-vector products by projecting both the input direction and the output onto the masked coordinates.

For each post-expansion solution, we estimate the top restricted Hessian eigenvalue $\lambda_{\max}$ using power iteration;
the restricted trace using a Hutchinson estimator;
and stochastic log-determinant estimates through a Lanczos-based approximation when needed.

The main paper reports the top eigenvalue and trace-based quantities.
These quantities are computed on two masks, namely
 the \emph{active subspace}, i.e., the parameters already trainable at the current stage;
 and the \emph{newly released subspace}, i.e., the parameters that become trainable when moving from stage $t$ to stage $t+1$.
We also report the dimensions of these masks and, when useful, the trace normalized by the number of active parameters.

\subsection{CIFAR-100 / ResNet-18 setup}
\label{app:resnet_setup}

All deep-learning experiments are conducted on CIFAR-100. We use the standard train/test split and further divide the original training set into train and validation subsets using a fixed 90/10 split. This yields approximately $45{,}000$ training examples and $5{,}000$ validation examples, while the test set contains the standard $10{,}000$ examples.
Training data use standard CIFAR-style augmentation: random crop with padding $4$, random horizontal flip, tensor conversion, and channel-wise normalization. Validation and test data use only tensor conversion and normalization. The normalization constants are the standard CIFAR-100 statistics
\(
\mu=(0.5071, 0.4867, 0.4408),
\;
\sigma=(0.2675, 0.2565, 0.2761).
\)

Our default architecture is a CIFAR-style ResNet-18. It consists of a $3\times 3$ convolutional stem with stride $1$, followed by four residual stages with block counts $[2,2,2,2]$, channel widths $64,128,256,512$, global average pooling, and a final linear classifier. 

Optimization uses SGD with Nesterov momentum. The main hyperparameters are:
 learning rate: $0.1$,
 momentum: $0.9$,
 weight decay: $5\times 10^{-4}$,
label smoothing: $0$,
total number of epochs: $100$.
We use cosine annealing over the full training horizon. Automatic mixed precision is enabled. The training batch size is $128$, while evaluation uses batch size $256$.

The main ResNet experiments compare a full-model baseline with all parameters trainable from the start; progressive growth schedules with $S\in\{3,5,10\}$ stages; with two growth triggers: validation accuracy and training loss.
Unless stated otherwise, the patience parameter is set to $2$ epochs. The main ResNet results are averaged over $3$ random seeds.

\subsection{Progressive growth mechanism}
\label{app:growth_impl}

Our practical growth mechanism is implemented by assigning each trainable tensor to a sequence of stage groups. For convolutional and linear layers, this grouping follows the first tensor dimension, which corresponds to output channels or output units. Roughly speaking, stage $t$ activates approximately $(t+1)/S$ of the coordinates, up to discrete effects due to tensor shape.

The final classifier layer is kept active from the first stage. Batch-normalization parameters and running statistics are grouped consistently with the corresponding feature groups whenever possible, so that normalization variables are released together with the channels they control. Parameters not covered by these structured cases are assigned stage groups at random.

At each optimization step, frozen parameters are clamped to their initialization values and their gradients are masked to zero. The same masking is applied to optimizer states of matching shape. Frozen floating-point buffers, including batch-normalization running statistics, are likewise restored to their initial values. Operationally, this implements the affine-slice interpretation used in the theory: at each stage, optimization is restricted to the active coordinates while the frozen complement remains fixed at initialization.

\subsection{Internal control protocols}
The full-model baseline trains the same architecture with all parameters active
from initialization. Fixed-subspace training uses the same initial active
subspace as progressive growth, but never releases the frozen complement.
One-shot unfreezing also starts from the same initial constrained submodel, but
releases all remaining parameters in a single transition. Progressive growth
releases the same total set of parameters over multiple transitions. These
controls separate three effects: training a restricted submodel, delaying
capacity release, and gradually relaxing constraints.

\subsection{Algorithmic summary}
The practical procedure used in the deep-learning experiments can be summarized
as follows.

\begin{enumerate}
    \item Initialize the full network parameters $\theta^0$.
    \item Partition trainable tensors into $S$ nested stage groups. For
    convolutional and linear layers, groups are formed along the output-channel
    or output-unit dimension. The classifier is active from the first stage.
    \item Activate the first group and freeze all remaining parameters at their
    initialization values.
    \item Train the active parameters using the chosen optimizer while clamping
    frozen parameters, frozen buffers, gradients, and matching optimizer states
    to their frozen values.
    \item Monitor the growth criterion, either validation accuracy or training
    loss.
    \item When the criterion has not improved for the specified patience, save
    the pre-expansion solution, release the next parameter group, and continue
    optimization from the current weights.
    \item After each expansion, compute transition diagnostics between the
    pre-expansion and post-expansion stage solutions: interpolation barrier,
    retention, leakage, endpoint gap, and restricted curvature on the active and
    newly released subspaces.
    \item Repeat until all groups have been released. At test time, the final
    model is the full architecture.
\end{enumerate}

\subsection{Stage selection and transition evaluation}
\label{app:stage_selection}

For each stage, we track a stage-dependent best checkpoint according to the chosen trigger:
\begin{itemize}
    \item for the \texttt{val\_acc} trigger, the stage metric is validation accuracy;
    \item for the \texttt{train\_loss} trigger, the stage metric is the negative training loss.
\end{itemize}
If the stage metric does not improve for a number of epochs equal to the patience parameter, growth advances to the next stage. In the main experiments, stage transitions are determined by this patience-based rule. The internal control experiments with validation-accuracy growth use the same
patience-based rule, without forcing stage transitions at predetermined epochs. The best validation checkpoint over the full run is used for final reporting.

When a stage transition occurs, we compare:
\begin{itemize}
    \item the pre-expansion checkpoint, i.e., the model immediately before releasing new parameters;
    \item the best checkpoint obtained after re-optimizing in the expanded subspace.
\end{itemize}
All transition metrics are computed between these two endpoints.

For interpolation-based diagnostics, we evaluate the loss on $11$ equally spaced points along the segment between the two endpoints. To reduce cost, transition losses are computed on at most $2$ batches of the validation loader. The same transition protocol is used throughout the reported deep-learning experiments.

\subsection{Hessian estimation details}
\label{app:hessian_details}

Restricted Hessian metrics are computed using a single batch drawn from a dedicated non-augmented training loader. The Hessian batch size is $128$. In the main configuration, the estimators use $10$ power-iteration steps for $\lambda_{\max}$,
 $4$ Hutchinson probes for the trace,
 $2$ stochastic Lanczos probes for the log-determinant,
 $8$ Lanczos steps,
damping $10^{-3}$.
The main paper relies primarily on $\lambda_{\max}$ and trace-based quantities.

\subsection{Patience ablation}
\label{app:patience_ablation_setup}

For the aggressive $S=10$ regime, we additionally study the effect of patience on stage stability. In the training-loss-triggered experiments, we evaluate patience values $p\in\{1,2,5\}$, while keeping all other optimization and data settings identical to the main ResNet experiments.

\subsection{MLP control experiment}
\label{app:mlp_control_experiment}

To test whether the curvature effect observed in ResNet is specific to
convolutional residual architectures, we run a controlled experiment with an
overparameterized fully connected MLP on a hard two-moons classification task.
The dataset contains $1200$ training examples and $3000$ test examples, with
noise level $0.30$. The MLP has four hidden layers with width $256$. We compare
full-model training, fixed-subspace training, one-shot unfreezing, and
progressive growth.

All methods reach comparable test accuracy, around $91.5\%$--$91.7\%$,
indicating that the growth procedure does not materially affect predictive
performance in this simple setting. Interpolation barriers are also nearly zero
for all growth variants, suggesting that the task does not induce strongly
separated stage-wise solutions. However, the restricted-curvature signal is
clear: one-shot unfreezing yields
$\lambda_{\max}^{\mathrm{new}}/\lambda_{\max}^{\mathrm{act}}
=0.5733\pm0.1552$, whereas progressive growth yields
$0.1691\pm0.0938$ for $S=5$ and $0.1142\pm0.0329$ for $S=10$. Thus, even in a
fully connected overparameterized model, progressive growth releases directions
that are substantially flatter than the already active subspace. This supports the
interpretation that the growth-induced curvature bias is not specific to residual
convolutional networks.

\begin{table}[!htbp]
\centering
\caption{
Controlled MLP experiment on a two-moons classification task.
All methods achieve comparable test accuracy and nearly zero interpolation
barriers, reflecting the simplicity of the task. Nevertheless, progressive growth
releases directions with substantially lower restricted curvature than one-shot
unfreezing, suggesting that the curvature effect is not specific to convolutional
residual architectures.
}
\label{tab:mlp_harder_twomoons_controls}
\resizebox{\linewidth}{!}{
\begin{tabular}{lccccc}
\toprule
Method
& Test acc. (\%) $\uparrow$
& Barrier $(\times 10^{-2})$ $\downarrow$
& Retention $\uparrow$
& Leakage $\downarrow$
& $\lambda_{\max}^{\mathrm{new}}/\lambda_{\max}^{\mathrm{act}}$ $\downarrow$ \\
\midrule
Full MLP
& $91.58 \pm 0.16$
& -- & -- & -- & -- \\
Fixed subspace, $S=5$
& $91.50 \pm 0.37$
& -- & -- & -- & -- \\
One-shot unfreeze, $S=5$
& $91.72 \pm 0.14$
& $0.00 \pm 0.00$
& $1.0000 \pm 0.0000$
& $0.0000 \pm 0.0000$
& $0.5733 \pm 0.1552$ \\
Progressive growth, $S=5$
& $91.60 \pm 0.12$
& $0.02 \pm 0.03$
& $1.0000 \pm 0.0000$
& $0.0000 \pm 0.0000$
& $0.1691 \pm 0.0938$ \\
Progressive growth, $S=10$
& $91.53 \pm 0.14$
& $0.00 \pm 0.00$
& $1.0000 \pm 0.0000$
& $0.0000 \pm 0.0000$
& $0.1142 \pm 0.0329$ \\
\bottomrule
\end{tabular}}
\end{table}

\subsection{Schedule sensitivity.}
Forced expansion schedules preserve the qualitative curvature contrast between
one-shot and progressive growth, but can degrade final accuracy and increase
interpolation barriers. This confirms that the timing of constraint relaxation
matters: the curvature bias induced by progressive growth is robust
qualitatively, but its effect on transition stability and predictive performance
is schedule-dependent.

\section*{Disclosure of Generative AI Use}
A generative AI tool was used in a limited manner to improve the readability of the manuscript, including grammar correction and stylistic editing. It was not used to generate scientific claims, results, proofs, experiments, figures, or references. All technical content, interpretations, and conclusions were produced and verified by the authors, who take full responsibility for the final manuscript.

\end{document}